\definecolor{ffqqqq}{rgb}{1.,0.,0.}
\definecolor{xfqqff}{rgb}{0.4980392156862745,0.,1.}
\definecolor{mypink}{rgb}{0.858, 0.188, 0.478}
\definecolor{myred}{rgb}{1, 0, 0}
\long\def\@makecaption#1#2{
        \vskip 0.8ex
        \setbox\@tempboxa\hbox{\small {\bf #1:} #2}
        \dimen0=\hsize
        \advance\dimen0 by 0cm
        \ifdim \wd\@tempboxa >\dimen0
                \hbox to \hsize{
                        \parindent 0em
                        \hfil 
                        \parbox{\dimen0}{\def\baselinestretch{0.96}\small
                                {\bf #1.} #2
                                } 
                        \hfil}
        \else \hbox to \hsize{\hfil \box\@tempboxa \hfil}
        \fi
        \vspace{0.4cm}
        }
\def\1{\bm{1}}
\def\vtheta{{\bm{\beta}}}
\DeclareMathAlphabet{\mathsfit}{\encodingdefault}{\sfdefault}{m}{sl}
\SetMathAlphabet{\mathsfit}{bold}{\encodingdefault}{\sfdefault}{bx}{n}
\newcommand{\R}{\mathbb{R}}
\DeclareMathOperator*{\argmin}{arg\,min}
\DeclareMathOperator{\sign}{sign}
\renewcommand{\paragraph}{%
  \@startsection{paragraph}{4}%
  {\z@}{0.25ex \@plus 1ex \@minus .2ex}{-1em}%
  {\normalfont\normalsize\bfseries}%
}
\newif\ifarxiv
\renewcommand\citet{\citep}
\appto\@floatboxreset{%
  \ifx\@captype\andy@table
    \sffamily
  \fi
}
\def\andy@table{table}
    \newif\ifshownotes
\title{Can semi-supervised learning use all the data effectively?\\A lower bound perspective}
\author{Alexandru \textcommabelow{T}ifrea\thanks{Equal contribution. Presented at the 37th Conference on Neural Information Processing Systems (NeurIPS 2023).} $^1$, Gizem Y\"uce$^{*1}$, Amartya Sanyal$^2$ and Fanny Yang$^1$
}
\affil{$^1$ETH Zurich,
$^2$Max Planck Institute for Intelligent Systems, T\"ubingen,
}
\begin{document}
\hypersetup{pageanchor=false}

\maketitle

\begin{abstract}

Prior works
have shown that semi-supervised learning algorithms can leverage unlabeled data to improve 
over the labeled sample complexity of supervised learning (SL) algorithms. However, existing theoretical analyses focus on regimes 
where the unlabeled data is sufficient to learn a good decision boundary using unsupervised learning (UL) alone. This begs the question: Can SSL algorithms simultaneously improve upon both UL \emph{and} SL? To this end, we derive a tight lower bound for 2-Gaussian mixture models that explicitly depends on the labeled and the unlabeled dataset size as well as the signal-to-noise ratio of the mixture distribution. Surprisingly, our result implies that no SSL algorithm can improve upon the minimax-optimal statistical error rates of SL or UL algorithms for these distributions. 
Nevertheless, we show empirically on real-world data that SSL algorithms can still outperform UL and SL methods. 
Therefore, our work suggests that, while proving performance gains
for SSL algorithms is possible, it requires 
careful tracking of constants.

\end{abstract}

\section{Introduction}
\label{sec:intro}
Semi-Supervised Learning (SSL) has recently garnered significant
attention, often surpassing traditional supervised learning (SL)
methods in practical applications \citep{balestriero2023cookbook,
10.5555/3495724.3497589SIMCLR, grill2020bootstrapbyol}. Within this
framework, the learning algorithm leverages both labeled and unlabeled
data sampled from the same distribution. Numerous empirical
studies suggest that SSL can effectively harness the information
from both datasets, outperforming both SL and unsupervised learning
(UL) approaches~\citep{goyal2019scaling, zoph2020rethinking,
el2021large, lucas2022barely}. This observation prompts the question:
how fundamental is the improvement of SSL over SL and UL?
\footnote{By error of UL we mean the prediction error up to sign. We formalize this paradigm of using UL first and then identifying the correct sign as \ulplus in \Cref{sec:setting-alg}.}

Prior theoretical results do not provide a consensus on this question. They consider 
different sample size regimes and degrees of 
``compatibility'' \citep{balcanblumcomp} between the marginal and the conditional distributions.
One line of work demonstrates that with enough unlabeled data, 
SSL is capable of lowering the labeled sample complexity compared to SL, for mixture models~\citep{ratsaby,pmlr-v151-frei22a} or more general ``clusterable'' distributions~\citep{rigollet}.
These scenarios have in common that the unlabeled data possesses a significant amount of information about the conditional distribution and SSL only performs as well as UL (up to permutation). 
Another line of work challenges the above literature by identifying distributions where SSL achieves the same error rate as SL \citep{BenDavid2008DoesUD, Tolstikhin2016MinimaxLB, Gpfert2019WhenCU}. These analyses consider scenarios where even oracle knowledge of the marginal distribution fails to improve upon the error rates of SL algorithms since the marginal does not carry any information about the labeling i.e.\ the conditional distribution. In these situations, SSL can only improve upon UL, but not SL.

In summary, the previous bounds do not provide a conclusive answer on the benefits of SSL over SL and UL. 
Therefore, in this paper, we provide a first answer to the following question:
\begin{center}

  \emph{Can semi-supervised classification algorithms simultaneously
  improve \\over the minimax rates of both SL and UL?}

\end{center}

In~\Cref{sec:setting,sec:theory_main}, we study this question in
the context of linear classification for symmetric 2-Gaussian mixture models (GMMs) -- a standard setting used in theoretical analyses of classification methods~\citep{ratsaby, pmlr-v151-frei22a,NIPS2013_456ac9b0, pmlr-v54-li17a, Wu2019RandomlyIE}.
We derive minimax error rates for semi-supervised learning that specifically depend on the ``regime'', characterized by three quantities: the amount of available unlabeled data~\(\nunl\), amount of available labeled data~\(\nlab\), and the inherent signal-to-noise ratio~(SNR) 
that quantifies the amount of information the marginal input
distribution has about the conditional label distribution (see~\Cref{sec:theory_main} for more details). An SNR-dependent minimax rate allows us to analyze the entire spectrum of problem difficulties for 2-GMMs. By contrasting the SSL minimax rates with established minimax rates for SL and UL, we find that in no regime can SSL surpass the statistical rates of both SL and UL simultaneously. 
In conclusion, a minimax-optimal SSL algorithm for 2-GMM is one that adeptly switches between SL and UL algorithms depending on the regime, and hence, it never uses the available data fully.

In~\Cref{sec:better_ssl}, we demonstrate how statistical rates may not offer a complete picture for explaining the practical benefits of SSL algorithms. Several prevalent SSL algorithms, such as self-training \citep{10.3115/981658.981684YAROWSKY, blum98}, are more sophisticated than UL approaches and use labeled data not only for determining the sign but also for learning the decision boundary. 
We show that an SSL ensembling method and self-training can indeed improve upon the best of SL and UL algorithms, in proof-of-concept experiments for linear classification tasks on both synthetic and real-world data. Since the improvements cannot be captured by statistical rates, our results highlight the significance of the constant factors in future theoretical work that analyzes the advantage of SSL algorithms.

\section{Problem setting and background}
\label{sec:setting}

Before providing our main results, in this section, we first define the
problem setting for our theoretical analysis, the evaluation metrics and the types of learning
algorithms that we compare. We then describe how we compare the rates between SSL and supervised and unsupervised learning

\subsection{Linear classification for 2-GMM data}
\label{sec:setting_gmm}

\paragraph{Data distribution.} We consider linear binary classification
problems where the data is drawn from a Gaussian mixture model (GMM)
consisting of two identical spherical Gaussians with identity
covariance and uniform mixing weights.  The means of the two
components \(\thetastar, -\thetastar\) are symmetric with respect to
the origin but can have arbitrary non-zero norm. We denote this family
of joint distributions as $\Pgmm := \{\jointsnr : \thetastar \in \RR^d\}$
and note that it can be factorized as 
\begin{equation}
  P_Y = \unifdist\{-1, 1\} \text{ and }  P^{\thetastar}_{X|Y} = \gauss(Y \thetastar, I_d). \label{eq:symm2GMM}
\end{equation}
This family of distributions has often been considered in the context
of analysing both SSL \citep{ratsaby, pmlr-v151-frei22a} and SL/UL
\citep{NIPS2013_456ac9b0, pmlr-v54-li17a, Wu2019RandomlyIE}
algorithms.  For $\snr \in (0, \infty)$, we denote by $\Pgmmsnr\subset\Pgmm$ and $\thetaset^{(\snr)} \subset \RR^d $
the set of distributions \(\jointsnr\)  and the set of parameters with
$\norm{\thetastar}=\snr$. With error rates that depend on $\snr$, we can explicitly capture the intrinsic information about the conditional $Y|X$ contained in the marginal distribution.
We consider algorithms $\alg$ that take as
input a labeled dataset $\Dlab\sim\left(\jointsnr\right)^{\nlab}$ of
size $\nlab$, an unlabeled dataset $\Dunl\sim
\left(\marginalsnr\right)^{\nunl}$ of size $\nunl$, or both, and
output an estimator $\thetahat=\alg\left(\Dlab, \Dunl\right)\in\RR^d$. The
estimator is used to predict the label of a test point $x$ as $\yhat =
\sign\left(\langle \thetahat, x \rangle\right)$.

\paragraph{Evaluation metrics} For a given parameter $\vtheta$ we define two natural
error metrics for this class of problems which are the
%
\begin{align}
  &\textbf{Prediction error: } \prederr\left(\thetavec, \thetastar\right):=
  \jointsnr\left(\sign\left(\langle \thetavec, X
  \rangle\right) \neq Y\right), \quad \text{and} \notag\\
  &\textbf{Estimation error: } \estimerr\left(\thetavec, \thetastar\right):= \norm{\thetavec - \thetastar}_2.\label{eq:est-error}
\end{align}
When the true $\thetastar$ is clear from the context, we drop the second argument for simplicity. 
For linear classification and a
2-GMM distribution from the family $\Pgmm$, a low estimation error
implies not only good predictive performance but also good
calibration under a logistic model \citep{Plat1999}.
In our discussions, we focus on the prediction error, but include a minimax rate for the estimation error for completeness. In particular, we bound the excess risk of $\thetavec$, defined as the distance between the risk of $\thetavec$ and the Bayes optimal risk
\begin{align*}\label{eq:excess-pred-err}
  \textbf{Excess prediction error: } \excess\left(\thetavec, \thetastar\right):=\prederr\left(\thetavec, \thetastar\right)- \inf_{\thetavec}\prederr\left(\thetavec,
  \thetastar\right),
\end{align*}
where $\inf_{\thetavec}\prederr\left(\thetavec,
  \thetastar\right)$ is achieved at $\thetastar$ but can be non-zero.



For a certain class of algorithms (i.e.\ SL, UL+, SSL), 
we study the worst-case expected error
over parameters $\thetastar \in \thetaset$. This
quantity indicates the limits of what is
achievable with the algorithm class. For instance, the
minimax optimal expected excess error of the algorithm class $\alg$ over parameters in \(\thetaset\) takes the
form:
\begin{align}
  \textbf{Minimax excess error: } \minimax\br{\nlab, \nunl, \thetaset} :=
  \inf_{\alg} \sup_{\thetastar \in \thetaset} \EE
  \left[\excess\left(\alg(\Dlab, \Dunl), \thetastar\right)\right].
\end{align}


\vspace{-0.3cm}
\subsection{Minimax optimal rates of supervised and unsupervised learning}
\label{sec:setting-alg}

We distinguish between three kinds of learning paradigms that can be used in the SSL setting to learn a decision boundary $\thetahat$ but are designed to leverage the available data differently.
For simplification, our discussion is tailored towards learning \(\Pgmm\), though the ideas hold more generally.

\paragraph{1) Semi-supervised learning (SSL)} SSL algorithms,
denoted as \(\algssl\), can utilize both labeled data \(\Dlab\) and unlabeled
samples \(\Dunl\) to learn the decision boundary and to produce an estimator
$\thetassl=\algssl\br{\Dlab, \Dunl}$. The promise of SSL is
that by combining labeled and unlabeled data, SSL can reduce both the
labeled and unlabeled sample complexities compared to algorithms
solely dependent on either dataset.

\begin{table}\centering
  \begin{tabular}{ccc}
    \toprule
    Learning paradigm & Excess risk rate & Estimation error rate\\
    \midrule
    SL & \( \Theta\left(e^{-\nicefrac{\snr^2}{2}}\frac{d}{\snr\nlab}\right) \) & \(  \Theta\left(\sqrt{\frac{d}{\nlab}}\right) \)\\\midrule
    
    UL(+) & \( \tilde{\Theta}\left(e^{-\nicefrac{\snr^2}{2}}\frac{d}{\snr^3\nunl}\right) \) & \( \Theta\left(\sqrt{\frac{d}{\snr^2\nunl}}\right) \) \\
    \bottomrule
  \end{tabular}
  \caption{Known minimax rates of SL and UL for learning 2-GMMs with \( \snr \in (0,1] \) \citep{Wu2019RandomlyIE, pmlr-v54-li17a}. The minimax rates for UL are up to choosing the correct sign. This rate is the same as for \(\ulplus\) if \(\nlab\geq \Omega(\log \nunl )\). For \( \snr > 1 \), the estimation error rate of UL is the same as SL. Hence, there is no trade-off one can study between SSL and UL or SL.
  }
  \label{table:known_rates}
  \vspace{-0.5cm}
\end{table}

\paragraph{2) Supervised learning (SL)} SL algorithms,
represented by $\algsl$, can only use the labeled dataset $\Dlab$ to yield an estimator $\thetasl = \algsl\br{\Dlab,\emptyset}$. 
The minimax rates of SL for distributions from $\Pgmmsnr$ (see~\Cref{table:known_rates}) are achieved by the mean estimator \( \thetasl =  \frac{1}{\nlab} \sum_{i=1}^{\nlab} Y_i X_i \), for both excess risk and estimation error.

\paragraph{3) Unsupervised learning (UL)} Traditionally, UL algorithms are tailored to learning the generative model for marginal distributions. For \(\Pgmm\) the marginal is governed by $\thetastar$ and UL algorithms
output a set of estimators $\{\thetaul,
-\thetaul\}=\algul\br{\emptyset, \Dunl}$ one of which is
guaranteed to be close to the true $\thetastar$. 
To evaluate prediction performance, we define the minimax rate of UL algorithms as the minimax rate for the closer~(to the true $\thetastar$) of the two estimators. This minimax rate of UL algorithms over $\Pgmmsnr$ is known for both the excess risk and the estimation error
\citep{pmlr-v54-li17a, Wu2019RandomlyIE} (see~\Cref{table:known_rates}). These rates are achieved by the unsupervised estimator \( \thetaul = \sqrt{(\hat{\lambda}-1)_+}\hat{v}\), where $(\hat{\lambda}, \hat{v})$ is the leading eigenpair of the sample
covariance matrix $\hat{\Sigma} = \frac{1}{\nunl} \sum_{j=0}^{\nunl} X_j
X_j^T$ and we use the notation $(x)_+:=\max(0, x)$.


\subsection{A "wasteful" type of SSL algorithm}

\begin{wrapfigure}{r}{0.45\textwidth}
\vspace{-0.65cm}
\begin{minipage}[t]{0.45\textwidth}
      \setlength{\algomargin}{0.4cm}
    \begin{algorithm}[H]
      \DontPrintSemicolon
      \small

   \SetKwInOut{KwInput}{Input}

   \KwInput{$\Dlab$, $\Dunl$}


   $\{\thetaul, -\thetaul\} \leftarrow \algul(\Dunl)$  \;

   $\thetaulp \leftarrow$ Select from $\{\thetaul, -\thetaul\}$ using $\Dlab$

   \KwRet \(\thetaulp\) \;

   \caption{\ulplus algorithms \(\algulp\)}
   \label{algo:ulplus}
  \end{algorithm}
\end{minipage}
\vspace{-0.8cm}
\end{wrapfigure}  

Several SSL algorithms used in practice (e.g.\ SimCLR~\citep{chen2020simple},
SwAV~\citep{caron2020unsupervised}) follow a two-stage procedure: 
i) determine decision boundaries using only unlabeled data; and ii) label decision regions using only labeled data. We
refer to this class of two-stage algorithms as \textbf{\ulplus} and denote them by \(\algulp\). Early analyses of semi-supervised learning focus, in fact, on algorithms that fit the description of~\ulplus \citep{ratsaby,rigollet}.

For linear binary classification, the two-stage framework is depicted in~\Cref{algo:ulplus}. 
The following proposition upper bounds the excess risk and estimation error of the \ulplus estimator given by
\begin{equation}
\label{eq:thetaulp}
    \thetaulp = \sign\left(\thetasl^\top  \thetaul\right)  \thetaul, 
\end{equation}  
where $\thetasl$ and $\thetaul$ are the rate-optimal SL and UL estimators described in~\Cref{sec:setting-alg}. 
\begin{proposition}[Upper bounds for $\thetaulp$ ]\label{prop:ul+main}
Let $\thetaulp$ be the estimator defined in~\Cref{eq:thetaulp}. For any $\snr >0$ the following holds when $\nunl \gtrsim \max\{d, \frac{d}{\snr^4},\frac{\log(\nunl)}{\min\{\snr^2, \snr^4\}}  , \frac{d\log(d\nunl)}{\snr^6}  \}$ and $d \ge 2$:
\begin{align}
     &\EE \left[ \excess\left(\thetaulp, \thetastar\right)\right] \lesssim e^{-\frac{1}{2}s^2}
         \frac{d}{\snr^3 \nunl} + e^{-\frac{1}{2}  \nlab\snr^2  \left(1- \frac{c_0 }{\min\{\snr, \snr^2\}}\sqrt{\frac{d \log(\nunl)}{ \nunl}}\right)^2 },   \text{ and for } s \in (0,1]  \notag \\
    &\EE \left[  \estimerr\left(\thetaulp, \thetastar\right) \right] \lesssim \sqrt{\frac{d}{s^2 \nunl}} + s  e^{-\frac{1}{2} \nlab \snr^2 \left(1- \frac{c_0 }{\snr^2} \sqrt{\frac{d \log(\nunl)}{\snr^2 \nunl}}\right)^2}. \hspace{10pt} \notag 
\end{align}
\end{proposition}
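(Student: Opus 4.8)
The plan is to exploit that $\thetaulp$ coincides with the rate-optimal UL estimator except through a single binary sign decision, so that its risk equals the UL risk plus a penalty paid only when that sign is wrong. Let $\bar\theta:=\sign(\thetastar^\top\thetaul)\,\thetaul$ denote the member of $\{\thetaul,-\thetaul\}$ closer to $\thetastar$, and let $W$ be the event $\sign(\thetasl^\top\thetaul)\neq\sign(\thetastar^\top\thetaul)$, on which $\thetaulp=-\bar\theta$ (otherwise $\thetaulp=\bar\theta$). Conditioning on $\Dunl$ and dropping $\sP(W^{c}\mid\Dunl)\le 1$, for each loss I would write
\[
\EE_{\Dlab}\!\left[\excess(\thetaulp)\mid\Dunl\right]\le\excess(\bar\theta)+\excess(-\bar\theta)\,\sP(W\mid\Dunl),
\]
and the analogous inequality for $\estimerr$, then take expectations over $\Dunl$.

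For the correct-sign term, $\EE[\excess(\bar\theta)]$ and $\EE[\estimerr(\bar\theta)]$ are exactly the minimax UL rates achieved by $\thetaul$ and recorded in \Cref{table:known_rates}, i.e.\ $e^{-\snr^2/2}\,d/(\snr^3\nunl)$ and $\sqrt{d/(\snr^2\nunl)}$. For the wrong-sign term I bound the loss crudely: $\excess(-\bar\theta)\le1$, while $\estimerr(-\bar\theta)=\norm{\bar\theta+\thetastar}\le\norm{\bar\theta}+\snr\lesssim\snr$, using that the norm of the UL estimator concentrates near $\snr$. Both then reduce to controlling $\sP(W\mid\Dunl)$.

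To bound the sign-flip probability, I use that $\Dlab$ and $\Dunl$ are independent, so conditional on $\Dunl$ the mean estimator is $\thetasl\sim\gauss(\thetastar,\tfrac1{\nlab}I_d)$, independent of the fixed vector $\thetaul$. Projecting onto $\thetaul$ gives $\thetasl^\top\thetaul\sim\gauss(\thetastar^\top\thetaul,\norm{\thetaul}^2/\nlab)$, and a one-sided Gaussian tail bound yields
\[
\sP(W\mid\Dunl)\le\exp\!\left(-\frac{\nlab(\thetastar^\top\thetaul)^2}{2\norm{\thetaul}^2}\right)=\exp\!\left(-\tfrac12\nlab\snr^2\cos^2\alpha\right),\qquad\alpha:=\angle(\thetastar,\thetaul);
\]
crucially this depends only on the angle of $\thetaul$, not its norm. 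It then remains to lower bound $\cos\alpha$ on a high-probability event. The population covariance $\Sigma=I_d+\thetastar\thetastar^\top$ has top eigenvector $\thetastar/\snr$ with spectral gap $\snr^2$, so Davis--Kahan gives $\sin\alpha\le\norm{\hat\Sigma-\Sigma}_{\mathrm{op}}/\snr^2$; an operator-norm concentration bound for the sample covariance of the sub-Gaussian mixture gives, on an event $\mathcal{E}$ of high probability, $\norm{\hat\Sigma-\Sigma}_{\mathrm{op}}\lesssim\max\{1,\snr\}\sqrt{d\log(\nunl)/\nunl}$, hence $\sin\alpha\lesssim\tfrac{1}{\min\{\snr,\snr^2\}}\sqrt{d\log(\nunl)/\nunl}$. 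Combining with $\cos\alpha\ge1-\sin\alpha$ reproduces exactly the factor $\bigl(1-\tfrac{c_0}{\min\{\snr,\snr^2\}}\sqrt{d\log(\nunl)/\nunl}\bigr)$ appearing in the statement.

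Finally I would assemble the pieces: on $\mathcal{E}$ the two displayed terms appear directly, while on $\mathcal{E}^{c}$ I bound $\sP(W\mid\Dunl)\le1$ and the loss by its trivial bound, so the extra contribution is at most $\sP(\mathcal{E}^{c})$ (times $\snr$ for the estimation error). The main obstacle is precisely this last step: obtaining the operator-norm deviation of $\hat\Sigma$ with the sharp $\snr$-dependence and, simultaneously, a failure probability $\sP(\mathcal{E}^{c})$ small enough to be dominated by the stated terms across all regimes of $\snr$. This, together with the $\min\{\snr,\snr^2\}$ bookkeeping and the slightly sharper exponent needed in the estimation-error bound (valid for $\snr\in(0,1]$), is exactly what forces the sample-size conditions $\nunl\gtrsim\max\{d,\,d/\snr^4,\,\log(\nunl)/\min\{\snr^2,\snr^4\},\,d\log(d\nunl)/\snr^6\}$.
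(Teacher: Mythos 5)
Your proposal is correct and follows essentially the same architecture as the paper's proof in \Cref{appendix:ulplus}: the same oracle-sign decomposition (correct-sign UL error plus a wrong-sign penalty weighted by the sign-error probability), the same use of the known UL rates of \cite{Wu2019RandomlyIE} and \cite{pmlr-v54-li17a} for the correct-sign term, and the same Gaussian-tail argument conditional on $\Dunl$ that reduces the sign-error probability to $\exp\bigl(-\tfrac12\nlab\snr^2\cos^2\alpha\bigr)$ with $\alpha=\angle(\thetaul,\thetastar)$. The one genuine divergence is the final ingredient: the paper lower-bounds $\cos\alpha$ with probability $1-d/\nunl$ by invoking Proposition~6 of \cite{NIPS2013_456ac9b0} as a black box, whereas you re-derive it via Davis--Kahan plus operator-norm concentration of $\hat\Sigma$; this is the standard argument underlying such results, so your route is self-contained where the paper's is not, but be aware that the $\max\{1,\snr\}$ (rather than $\max\{1,\snr^2\}$) deviation you assert does \emph{not} follow from a generic sub-Gaussian covariance bound --- you must exploit the mixture structure, writing $X_i=Y_i\thetastar+Z_i$ with $Z_i\sim\gauss(0,I_d)$ and decomposing $\hat\Sigma-\Sigma$ into a centred Wishart term $\tfrac1{\nunl}\sum_i(Z_iZ_i^\top-I_d)$ plus the rank-two cross term $\thetastar\bigl(\tfrac1{\nunl}\sum_i Y_iZ_i\bigr)^{\!\top}+\bigl(\tfrac1{\nunl}\sum_i Y_iZ_i\bigr)\thetastar^\top$, each of which concentrates at the claimed rate; this is exactly the obstacle you flag, and it does work out. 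One small repair: your pointwise bound $\estimerr(-\sign(\thetastar^\top\thetaul)\,\thetaul,\thetastar)\le\norm{\thetaul}+\snr\lesssim\snr$ is not valid almost surely, since $\norm{\thetaul}$ has unbounded support; the paper sidesteps any norm-concentration event with the triangle inequality $\norm{\sign(\thetastar^\top\thetaul)\,\thetaul+\thetastar}\le\norm{\sign(\thetastar^\top\thetaul)\,\thetaul-\thetastar}+2\snr$, whose first term is absorbed into the main UL term in expectation, and you should do the same.
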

\Cref{appendix:ulplus} contains the complete statement of the proposition including logarithmic factors and the proof. 
Note that for 
large enough $\nlab$ the first term in the upper bound dominates, thereby resulting in the same rate as the minimax rate of UL up to choosing the correct sign.
We remark that, while~\Cref{algo:ulplus} defines~\ulplus algorithms as only using the unlabeled dataset \(\Dunl\) for the unsupervised
learning step, one can also use the labeled dataset \(\Dlab\) without labels in that step. However, typically, in practice, \ulplus style algorithms (e.g.\ SimCLR, SwAV) do not use the labeled data in this way, as they operate in a regime where  unlabeled data is far more numerous than labeled data. Thus, we analyze~\Cref{algo:ulplus} in this work.

\paragraph{Why UL+ algorithms are ``wasteful''} As
indicated in~\Cref{algo:ulplus},~\ulplus type algorithms follow a precise
structure where labeled data is used solely to select from the set of
estimators output by a UL algorithm. Intuitively, such algorithms do not take full advantage of the labeled data as it is not used to refine the decision boundary. 
In prior empirical and theoretical studies, this inefficiency has not been a problem since they have focused on the regime $n_u =\omega(n_l)$, where unlabeled data is often orders of magnitude more abundant than labeled data. When  $\nunl = \Theta(\nlab)$ or even $\nunl \ll \nlab$, however, \Cref{prop:ul+main} shows how this two-stage approach of \ulplus can become strikingly ineffective.
For simplicity, consider the extreme scenario where $\nunl$ is
finite, but $\nlab\to\infty$. The error of a~\ulplus algorithm will,
at best, mirror the error of a UL algorithm with the correct sign
(e.g.\ $\Theta\br{\nicefrac{d}{\nunl}}$ for the excess risk). Thus, despite
using both labeled and unlabeled data, ~\ulplus algorithms bear a
close resemblance to UL algorithms that only use unlabeled data.

\subsection{Brief overview of prior error bounds for SSL}
\label{sec:prior-upper} 

In this section, we discuss prior theoretical works that aim to show benefits and limitations of SSL.

\paragraph{Upper bounds} 
There are numerous known upper bounds on the excess risk of SSL algorithms for $\Pgmm$ distributions. 
However, despite showing better dependence on the labeled set size, these earlier bounds primarily match the UL+  rates~\citep{ratsaby, rigollet} or exhibit slower rates than UL+~\citep{pmlr-v151-frei22a}. 
That is, we cannot conclude from existing results that SSL algorithms can consistently outperform \emph{both} SL and UL+, which is the question we aim to address in this paper.


\paragraph{Lower bounds} In contrast to the upper bounds that aim to demonstrate benefits of SSL, three distinct minimax lower bounds for SSL have
been proposed to show the limitations of SSL. Each  proves, in different settings, that there exists a distribution $\joint$
where SSL cannot outperform the SL minimax rate.
\cite{BenDavid2008DoesUD} substantiates this claim for learning
thresholds from univariate data sourced from a uniform distribution on
\(\bs{0,1}\).~\cite{Gpfert2019WhenCU} expands upon this by considering
arbitrary marginal distributions $\marginal$ and a ``rich'' set of
realizable labeling functions, such that no volume of unlabeled data
can differentiate between possible hypotheses. Lastly,
\cite{Tolstikhin2016MinimaxLB} sets a lower bound for scenarios with
no implied association between the labeling function and the marginal
distribution, a condition recognized as being unfavorable for SSL
improvements~\citep{scholkopf2012}.
Each of the aforementioned results contends that a particular worst-case distribution $\joint$ exists, where the labeled sample complexity for SSL matches that of SL, even with limitless unlabeled data. 

To summarize, the upper bounds show that SSL improves upon SL in settings where \(\nunl\) is significantly larger than $\nlab$. In fact, for such large unlabeled sample size even \(\ulplus\) can achieve small error. On the other hand, the lower bounds prove the futility of SSL for distributions where even infinite unlabeled data cannot help i.e.\ \(\marginal\) does not contain sufficient information about the conditional distribution. However these works fail to answer the question: does there exist a relatively moderate \(\nunl\) regime where SSL is better than both SL and UL+?

In the family of~\(\Pgmm\) distributions, the above lower bounds translate to the hard setting where \(\nunl\ll\nicefrac{1}{\snr}\). We now aim to prove a minimax lower bound for a fixed difficulty~\(\snr>0\), that allows us to answer the above question in different regimes in terms of \(\nlab,\nunl\).


\section{Minimax rates for SSL}
\label{sec:theory_main}
In this section we provide tight minimax lower bounds for SSL
algorithms and 2-GMM distributions in $\Pgmmsnr$. Our results
indicate that it is, in fact, not possible for SSL algorithms to
simultaneously achieve faster minimax rates than both SL and UL+.

\subsection{Minimax rate}\label{sec:excesslb}

We begin by introducing tight lower bounds on the excess risk~\eqref{eq:excess-pred-err} of a linear estimator obtained using both labeled and unlabeled data. In addition, we also present tight lower bounds on the estimation error~\eqref{eq:est-error} of the means of class-conditional distributions obtained similarly using labeled and unlabelled data. This is especially relevant when addressing the linear classification of
symmetric and spherical GMMs. In this setting, a reduced estimation error points to not only a low excess risk but also suggests a small calibration
error under the assumption of a logistic noise
model~\citep{Plat1999}. Both of these results are presented in~\Cref{thm:excess_main}. We present short proof sketches here and relegate the formal conditions required by the theorem to hold as well as the full proofs to~\Cref{appendix:excess,appendix:estimation}~(for the estimation error and excess risk, respectively).


\begin{theorem}[SSL Minimax Rate for Excess Risk and Estimation Error]
\label{thm:excess_main}
\label{thm:param_est_main}
Assume the conditions in~\Cref{prop:ul+main} and additionally let $\nlab > O(\frac{\log \nunl}{\snr^2})$. Then for any $\snr > 0$, 
we have
\begin{align*}
  &\inf_{\algssl} \sup _{\norm{\thetastar}=\snr} \EE\bs{ \errssl} = \Theta\left( e^{-\nicefrac{\snr^2}{2}} \min \bc{s, 
\frac{d}{\snr \nlab + \snr^3 \nunl}} \right),\text{ and for $s\in(0, 1]$}\\
    &\inf _{\algssl} \sup _{\norm{\thetastar}=\snr}
    \EE\bs{\estimerr(\algssl(\Dlab, \Dunl), \thetastar)} =
  \tilde{\Theta} \left(\min \bc{ s, \sqrt{\frac{d}{ \nlab+ \snr^2 \nunl}}}\right),
\end{align*}
where the infimum is over all the possible SSL algorithms that have access to both unlabeled and labeled data and the expectation is over $\Dlab \sim
    \br{\jointsnr}^{\nlab}$ and $\Dunl \sim
    \br{\marginalsnr}^{\nunl}$. 
\end{theorem}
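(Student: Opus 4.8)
The statement is an equality of rates, so the plan is to prove a matching upper and lower bound. For the \textbf{upper bound} it is enough to observe that $\frac{d}{\snr\nlab+\snr^3\nunl}\asymp\min\{\frac{d}{\snr\nlab},\frac{d}{\snr^3\nunl}\}$, so the ``switching'' estimator that outputs the rate-optimal SL estimator $\thetasl$ when $\snr\nlab\ge\snr^3\nunl$ and the UL+ estimator $\thetaulp$ otherwise already attains the claimed rate: the SL branch achieves $e^{-\snr^2/2}\frac{d}{\snr\nlab}$ by the known minimax rate of~\Cref{table:known_rates}, while the UL+ branch achieves $e^{-\snr^2/2}\frac{d}{\snr^3\nunl}$ by~\Cref{prop:ul+main}. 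The extra hypothesis $\nlab > O(\log\nunl/\snr^2)$ is exactly what forces the sign-selection term in~\Cref{prop:ul+main} to be negligible, leaving only the unsupervised term. Comparing with the trivial estimator $\thetahat=0$ (estimation error $\snr$) and a fixed direction (excess risk of order $e^{-\snr^2/2}\snr$) supplies the $\min\{\snr,\cdot\}$ cap. The interesting part is the lower bound.

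For the \textbf{lower bound} I would run Assouad's lemma, which produces the dimension factor $d$ and cleanly combines the two data sources. Fixing a base direction $e_1$, for each $\tau\in\{-1,+1\}^{d-1}$ define $\theta_\tau=\snr\,v_\tau$ with $v_\tau$ the unit vector proportional to $e_1+\gamma\sum_j\tau_j e_{j+1}$, so that $\norm{\theta_\tau}=\snr$ for all $\tau$ and flipping a single bit moves $\theta_\tau$ by order $\snr\gamma$ in a direction orthogonal to $e_1$ (equivalently, it rotates the direction $v_\tau$ by angle $\asymp\gamma$). Because the $\nlab$ labeled and $\nunl$ unlabeled samples are mutually independent, the KL divergence between the data laws $\mu_\tau,\mu_{\tau'}$ of two neighboring hypotheses is additive,
\[
\KL(\mu_\tau \,\|\, \mu_{\tau'}) = \nlab\,\KL(P^{\theta_\tau}_{XY} \,\|\, P^{\theta_{\tau'}}_{XY}) + \nunl\,\KL(P^{\theta_\tau}_{X} \,\|\, P^{\theta_{\tau'}}_{X}),
\]
and it remains to estimate the two per-sample divergences.

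The labeled term is easy: conditioning on the uniform label reduces it to a Gaussian location model, giving $\KL(P^{\theta_\tau}_{XY}\,\|\,P^{\theta_{\tau'}}_{XY})=\tfrac12\norm{\theta_\tau-\theta_{\tau'}}^2\asymp\snr^2\gamma^2$. The unlabeled term is the crux. The marginal is the symmetric mixture with density proportional to $e^{-\norm{x}^2/2}\cosh(\langle\theta,x\rangle)$, and its sign symmetry kills the first-order dependence on $\theta$, so a second-order (Fisher-information) expansion reveals that the marginal KL is degraded by a factor $\snr^2$ relative to the labeled case, $\KL(P^{\theta_\tau}_{X}\,\|\,P^{\theta_{\tau'}}_{X})\asymp\snr^2\norm{\theta_\tau-\theta_{\tau'}}^2\asymp\snr^4\gamma^2$. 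I expect making this $\snr^2$ degradation rigorous---controlling the remainder of the expansion uniformly, including in the $\snr>1$ regime---to be the main obstacle; a robust route is to upper bound the $\chi^2$-divergence between the two mixtures directly and use $\KL\le\chi^2$. Granting this, the neighbor divergence is of order $\snr^2(\nlab+\snr^2\nunl)\gamma^2$, so choosing $\gamma^2\asymp\frac{1}{\snr^2(\nlab+\snr^2\nunl)}$ keeps it below a constant (hence neighbors have total variation bounded away from one, by Pinsker) while the per-bit separation stays at order $\snr^2\gamma^2$.

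Finally I would convert distinguishability into error. On the hard instances the estimate makes a small angle $\alpha$ with $\thetastar$, and since the prediction error is $\Phi(-\snr\cos\alpha)$ (with $\Phi$ the standard normal CDF) while the Bayes error is $\Phi(-\snr)$, a second-order expansion gives $\excess\asymp\snr e^{-\snr^2/2}\alpha^2$; under the sample-size conditions the packing satisfies $\gamma\sqrt{d}\ll1$, so all angles remain small and this quadratic lower bound is valid. This angular loss decomposes additively over the $d-1$ perturbed coordinates (neighboring directions are $\asymp\gamma$ apart), so Assouad's lemma yields $\inf_{\algssl}\sup_{\norm{\thetastar}=\snr}\EE[\excess]\gtrsim \snr e^{-\snr^2/2}\gamma^2 d \asymp e^{-\snr^2/2}\frac{d}{\snr\nlab+\snr^3\nunl}$, and a two-point argument separating two orthogonal directions supplies the $e^{-\snr^2/2}\snr$ cap. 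The estimation-error bound $\sqrt{d/(\nlab+\snr^2\nunl)}$ follows from the same hypercube via the $\ell_2$ version of Assouad, with the $\tilde\Theta$ absorbing the standard logarithmic factors inherited from~\Cref{prop:ul+main}. Matching the upper bound of the first paragraph then closes both the $\Theta$ and $\tilde\Theta$ characterizations.
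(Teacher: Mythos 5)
Your proposal is correct in substance, but it reaches the lower bound by a genuinely different route: Assouad's hypercube lemma instead of the paper's Fano-based argument. The paper applies Fano's method twice, with an exponential-size packing of the sphere borrowed from prior work (the construction of \cite{pmlr-v54-li17a} for excess risk, that of \cite{Wu2019RandomlyIE} for estimation error), and handles the fact that excess risk is not a metric via the weak-triangle-inequality condition of \citet{NIPS2013_456ac9b0} (condition~\eqref{eq:packass}); you instead get the dimension factor from the $d-1$ hypercube coordinates and handle the non-metric loss through the local equivalence $\excess \asymp \snr e^{-\nicefrac{\snr^2}{2}}\,\alpha^2$ with the angle $\alpha$, which plays exactly the role of the paper's condition~\eqref{eq:packass}. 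The essential information-theoretic content is identical in both proofs: additivity of KL over the independent labeled and unlabeled samples, the exact joint KL $\tfrac12\|\vtheta_i-\vtheta_j\|^2$, and crucially the $\snr^2$-degraded marginal KL $\lesssim \snr^2\|\vtheta_i-\vtheta_j\|^2$. You correctly flag this last estimate as the crux; note that the paper does not prove it either but imports it (Lemma~27 of \cite{Wu2019RandomlyIE}, Proposition~24 of \cite{NIPS2013_456ac9b0}), so your proposed $\chi^2$-divergence derivation is doing work the paper outsources, and your worry about $\snr>1$ dissolves because there data processing gives marginal KL $\le$ joint KL $\asymp \snr^2\gamma^2 \le \snr^4\gamma^2$, which suffices. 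Your upper bound is the same as the paper's: the switching estimator combined with \Cref{prop:ul+main}, the SL rate, the trivial estimator for the $\min\{\snr,\cdot\}$ cap, and the condition $\nlab > O(\nicefrac{\log \nunl}{\snr^2})$ killing the sign-selection term. What each approach buys: the paper's is modular and short because it reuses existing packings and KL lemmas, and Fano needs only a max over the packing rather than per-coordinate structure; yours is more self-contained and avoids constructing a net of size $e^{\Omega(d)}$, at the price of verifying that the excess risk decomposes per hypercube coordinate (your quadratic-angle step) and of a separate two-point argument for the cap, which the paper absorbs into its choice of the packing scale $\alpha$.
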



The notation $\tilde{\Theta}$ also hides negligible logarithmic factors. In the rest of the section, we focus solely on the bound for excess risk; however, we note that the discussion here transfers to estimation error as well. In~\Cref{sec:analysis}, we discuss the new insights that this bound provides. A direct implication of the result is that
$\minimaxsslgmm =\Theta\left(\min\br{\minimaxslgmm, \minimaxulpgmm}\right)$,  that is, the minimax rate of SSL is the same as either that of SL or UL+,  depending on the value of $\snr$ and the rate of growth of \(\nunl\) compared to \(\nlab\). 
Therefore, we can conclude that \emph{no SSL algorithm can simultaneously improve the rates of both SL and UL+ for $\vtheta \in \thetaset$.}
We provide a detailed discussion of the rate improvements in~\Cref{sec:analysis}.


\paragraph{Proof sketch} The full proof of the lower bound for excess risk is presented in \Cref{appendix:excess}.
For this proof, we adopt the packing construction in \cite{pmlr-v54-li17a} and apply Fano's method together with techniques introduced in \citet{NIPS2013_456ac9b0}. Since the algorithms have access to both the labeled and unlabeled datasets in the semi-supervised setting, KL divergences between both the marginal and the joint distributions appear in the lower bound after the application of Fano's method, which is the key difference from its SL and UL counterparts. 

\SetKwIF{If}{ElseIf}{Else}{if}{}{else if}{else}{end if}%
\begin{wrapfigure}{r}{0.46\textwidth}
 \vspace{-0.5cm}
\begin{minipage}[b]{0.46\textwidth}
        \setlength{\algomargin}{0.4cm}
      \begin{algorithm}[H]
        \DontPrintSemicolon
        \small

     \SetKwInOut{KwInput}{Input}

     \KwInput{$\Dlab$, $\Dunl$, $\snr$, $\algsl$, $\algulp$ }

     \KwResult{\(\thetassls\)}

     $\thetasl \leftarrow \algsl(\Dlab)$ \;
     
     $\thetaulp \leftarrow \algulp(\Dunl, \Dlab)$ \;

    \If{$s \leq \min \left \{\sqrt{\frac{d}{\nlab}},\left( \frac{d}{\nunl} \right)^{1/4} \right \}$}
        {$\thetassls=0$}
    \ElseIf{$\min \left \{\sqrt{\frac{d}{\nlab}},\left( \frac{d}{\nunl} \right)^{1/4} \right \} < s \leq \sqrt{\frac{\nlab}{\nunl}}$}
        {$\thetassls=\thetasl$}
    \Else
        {$\thetassls=\thetaulp$}
     
     \KwRet \(\thetassls\) \;

     \caption{SSL-S algorithm}
     \label{algo:ssl_switch}
\vspace{-0pt}
    \end{algorithm}
  \end{minipage}
  \vspace{-0.5cm}
\end{wrapfigure}

We then show that the rate can be matched by the \textbf{SSL Switching Algorithm (SSL-S)} in Algorithm~\ref{algo:ssl_switch} -- an oracle algorithm that switches between using a (minimax
optimal) SL or UL+ algorithm based on the values of $\snr, \nlab,$ and
$\nunl$. The upper bound then follows as a corollary from Proposition \ref{prop:ul+main} and the upper bounds for supervised learning. 

For the parameter estimation 
error lower bound, we use Fano's method with the packing construction in 
\cite{Wu2019RandomlyIE}, who have employed this method to
derive lower bounds in the context of unsupervised learning. Similar to the excess risk analysis, the lower bound reveals that the SSL rate is either determined by the
SL rate or the~\ulplus rate depending on $\snr$ and the ratio of the sizes of the labeled and unlabeled samples.  Once again, the minimax error rate is matched by the SSL Switching algorithm presented in~\Cref{algo:ssl_switch}.

\paragraph{Discussion of the details of the theorem}
We note that the SSL-S algorithm chooses to output the trivial estimator $\thetassls=\mathbf{0}$ if the SNR $\snr$ is low. In the low-SNR regime, the trivial estimator $\thetahat=\mathbf{0}$ achieves a small excess risk of
$\excess\br{\mathbf{0}, \thetastar}=\Phi\br{\snr}-\Phi\br{0}\simeq
e^{-\nicefrac{\snr^2}{2}}\snr$, where $\Phi$ is the CDF of the standard normal distribution.

Furthermore, the lower bound for the excess risk in~\Cref{thm:excess_main}, is only tight up to
logarithmic factors. We conjecture that the logarithmic factors are an
artifact of the analysis and can be removed. For instance, it may be
possible to extend results in~\cite{ratsaby} that bound the excess
risk using the estimation error upper bound without incurring
logarithmic factors. However, their results are not directly
applicable here because they are only valid under the assumption that the estimation error is arbitrarily small.

Finally, note that for \( \snr > 1 \), the estimation error rate of UL is the same as SL. Hence, in this regime, there is no trade-off one can study between SSL and UL or SL because the error rate is independent of the percentage of available labels in the data. Therefore, we only focus on the setting where \( \snr \in (0,1] \).
While the \emph{rates} of either SL or UL+ cannot be improved further
using SSL algorithms, it is nonetheless possible to improve the error
by a constant factor, as discussed in Section~\ref{sec:better_ssl}.



\subsection{Comparison of SSL with UL+ and SL in different regimes}
\label{sec:analysis}

To understand whether 
SSL algorithms can fundamentally use 
labeled and unlabeled data more effectively than SL or \ulplus we compare the minimax error rates between the different learning paradigms.
A faster minimax rate for SSL would imply that indeed there exists an SSL algorithm that provably outperforms both SL and \ulplus. 
In this section, we study the \emph{rate improvement} defined as the ratio 
between the minimax rates for SSL and 
SL or \ulplus.

\begin{definition}[SSL rate improvement]
  \label{ssl_helps}

For any set of parameters $\thetaset \subseteq \RR^d $, we define the rate improvements 
of SSL over SL and \ulplus as 
\begin{align}
  \ratel(\nlab, \nunl, \thetaset) &:= \dfrac{\inf_{\algssl} \sup_{\thetastar \in
  \thetaset}\EE \bs{\errsslgen}}{\inf_{\algsl}\sup_{\thetastar \in
  \thetaset}\EE\bs{\errslgen}}, \label{eq:cond1}\\
  \rateu(\nlab, \nunl, \thetaset) &:= \dfrac{\inf_{\algssl} \sup_{\thetastar \in
  \thetaset} \EE\bs{\errsslgen}}{\inf_{\algulp}\sup_{\thetastar \in
  \thetaset}\EE\bs{\errulgen}}, \label{eq:cond2}
\end{align}
where the expectations are over $\Dlab \sim {(\joint^{\, \thetastar})}^{\nlab}$ and $\Dunl
\sim {(\marginal^{\, \thetastar})}^{\nunl}$.

\end{definition}

A straightforward upper bound for
these rate improvement ratios is $\ratel,\rateu \le 1$, achieved when utilizing an SL
and UL+ algorithm, respectively. SSL demonstrates an enhanced
error rate over SL and UL+, if both $\lim_{\nlab,\nunl \to
\infty}\ratel(\nlab, \nunl, \thetaset)=0$ and $\lim_{\nlab,\nunl \to
\infty}\rateu(\nlab, \nunl, \thetaset)=0$. If 
$\ratel$ or $\rateu$ lies in $(0, 1)$ without converging to zero as
$\nlab,\nunl\to\infty$, then SSL surpasses SL or UL+, respectively, only by a constant factor.

We now discuss the rate improvement of SSL for different regimes, determined by varying amounts of information about the conditional $Y|X$ captured in the unlabeled data.
For this purpose, we employ the minimax rates of~\Cref{thm:excess_main} that show an explicit dependence on $\snr, \nlab$ and $\nunl$, namely the parameters that induce these different regimes.
We use the shorthands $\ratel(\nlab, \nunl, \snr)$ and
$\rateu(\nlab, \nunl, \snr)$ to denote the rate improvement for $\thetaset=\thetaset^{(\snr)}$. 
Given~\Cref{thm:excess_main}, we can directly derive the rate improvement of SSL in the following corollary.

\begin{corollary}
  Assuming the setting of~\Cref{thm:excess_main},
    the rate improvement of SSL can be written as:
    \begin{align*}
      \textbf{Rate improvement over SL: } &\ratel\br{\nlab, \nunl, \snr} = \Theta\left(
      \frac{\nlab}{\nlab + \snr^2 \nunl}\right).\\
      \textbf{Rate improvement over UL+: } &\rateu\br{\nlab, \nunl, \snr} =\Theta\left( \frac{\snr^2
    \nunl}{\nlab + \snr^2 \nunl}\right).
    \end{align*}
    \label{cor:ssl}
\end{corollary}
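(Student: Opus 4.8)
The plan is to read off the three relevant minimax rates and take the two ratios in \Cref{ssl_helps}; the result is a direct algebraic consequence of \Cref{thm:excess_main} together with \Cref{table:known_rates}, so the work is entirely in the bookkeeping. Concretely, the numerator of both $\ratel$ and $\rateu$ is the SSL excess-risk rate $\Theta\!\left(e^{-\snr^2/2}\min\{s,\frac{d}{\snr\nlab+\snr^3\nunl}\}\right)$ (with $s=\snr=\|\thetastar\|$), while the denominators are the SL rate $\Theta\!\left(e^{-\snr^2/2}\frac{d}{\snr\nlab}\right)$ and the UL+ rate $\tilde\Theta\!\left(e^{-\snr^2/2}\frac{d}{\snr^3\nunl}\right)$. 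The exponential factor $e^{-\snr^2/2}$ is common to all three and cancels in each ratio, so it suffices to compare the polynomial prefactors.

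The first step is to resolve the minimum in the SSL rate. The hypotheses inherited from \Cref{prop:ul+main} include $\nunl \gtrsim d/\snr^4$, which gives $\frac{d}{\snr^3\nunl}\lesssim \snr=s$ and hence $\frac{d}{\snr\nlab+\snr^3\nunl}\le \frac{d}{\snr^3\nunl}\lesssim s$; thus the minimum is attained at its second argument and the SSL prefactor is $\frac{d}{\snr\nlab+\snr^3\nunl}$. Dividing by the SL and UL+ prefactors and cancelling the common $d$ yields
\begin{align*}
  \ratel &= \Theta\!\left(\frac{d/(\snr\nlab+\snr^3\nunl)}{d/(\snr\nlab)}\right)=\Theta\!\left(\frac{\snr\nlab}{\snr\nlab+\snr^3\nunl}\right)=\Theta\!\left(\frac{\nlab}{\nlab+\snr^2\nunl}\right),\\
  \rateu &= \Theta\!\left(\frac{d/(\snr\nlab+\snr^3\nunl)}{d/(\snr^3\nunl)}\right)=\Theta\!\left(\frac{\snr^3\nunl}{\snr\nlab+\snr^3\nunl}\right)=\Theta\!\left(\frac{\snr^2\nunl}{\nlab+\snr^2\nunl}\right),
\end{align*}
where the final equality in each line divides numerator and denominator by $\snr$. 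Keeping the sum $\snr\nlab+\snr^3\nunl$ intact (rather than splitting into the regimes $\nlab \le \snr^2\nunl$ and $\nlab \ge \snr^2\nunl$) makes the formula uniform across all regimes, since $1/(a+b)=\Theta(\min\{1/a,1/b\})$.

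There is no genuine mathematical obstacle here; the only care is in the regime bookkeeping. The main point to check is the branch of the $\min$, which I tie to the explicit condition $\nunl\gtrsim d/\snr^4$ above rather than merely passing to the limit $\nlab,\nunl\to\infty$. A second, more delicate point is that the quoted SL and UL+ rates are themselves the non-capped branches of rates that are also bounded by the trivial value $s$; for the stated formulas one must be in the regime where these denominators are the active branches (in particular $\nlab\gtrsim d/\snr^2$ for the SL rate), which is precisely the data-rich regime in which the rate-improvement comparison is meaningful. Finally, the UL+ denominator carries logarithmic factors ($\tilde\Theta$), which are suppressed in the corollary; I would note that they contribute only a $\mathrm{polylog}$ factor to $\rateu$ and do not affect the stated polynomial rate.
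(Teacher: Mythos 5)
Your proof is correct and takes essentially the same route as the paper: the paper gives no separate argument for \Cref{cor:ssl}, deriving it directly by dividing the SSL minimax rate of \Cref{thm:excess_main} by the SL and UL+ rates of \Cref{table:known_rates}, which is exactly your ratio computation. Your additional bookkeeping — using $\nunl \gtrsim d/\snr^4$ to resolve the $\min$ in the SSL rate, and flagging that the uncapped SL denominator presumes $\nlab \gtrsim d/\snr^2$ while the UL+ denominator carries suppressed logarithmic factors — is more explicit than the paper but consistent with its implicit assumptions.
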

 \vspace{-0.4cm}

Based on this corollary, we now argue that no SSL algorithm can simultaneously improve the rates of both SL and UL+ for any regime of $\snr, \nlab$ and $\nunl$, as summarized in Table~\ref{table:rates}. The different regimes describe asymptotically the relative amount of information about the labeling $Y|X$ that is captured in the unlabeled data.


\paragraph{1. Unlabeled data has little information relative to SNR compared to labeled data} 
For extremely low values of the SNR (i.e.\ $\snr \to 0$ faster than $\sqrt{\nunl} \to \infty$), or when labeled data is plentiful (i.e.\ $\nunl = o(\nlab)$), the relative information captured by the unlabeled data is significantly lower compared to the labeled data.
This setting has been the focus of previous worst-case analyses for
SSL \citep{BenDavid2008DoesUD, Tolstikhin2016MinimaxLB,
Gpfert2019WhenCU}. 
\Cref{cor:ssl} indicates that in this regime it holds that $\lim_{\nlab,\nunl\to\infty}\ratel(\nlab, \nunl, \snr) = c_{SL}>0$, and hence,
SSL fails to improve the SL rates even with infinite unlabeled data.

\paragraph{2. Unlabeled data is overabundant compared to labeled data} 
This setting where $\nunl \gg \nlab$ has been previously studied in works that show upper bounds for SSL algorithms.
As mentioned in~\cite{Gpfert2019WhenCU}, in order for SSL to lead to a
rate improvement compared to SL, it is \emph{necessary} that the
unlabeled set size is at least a superlinear function of the labeled
set size, i.e.~\(\nunl=\omega(\nlab)\). Corollary~\ref{cor:ssl} shows
that this condition is, in fact, \emph{sufficient} for 2-GMM
distributions: for $\snr > 0$, as long as \(\nunl\) grows
superlinearly with respect to \(\nlab\), $\ratel(\nlab, \nunl,
\snr)\to 0$, and hence, SSL can achieve faster rates than SL.  Despite
the improvement over SL, for this setting, the asymptotic error ratio
between SSL and UL+ does not vanish, i.e.\ $\lim_{\nlab, \nunl\to\infty}
\rateu\br{\nlab, \nunl, \snr} = c_{\mathrm{UL}}>0$.

\paragraph{3. Unlabeled data carries a similar amount of information as labeled data} Finally,
in the regime where the unlabeled dataset size depends linearly on the
size of the labeled set, i.e.\
\(\lim_{\nlab,\nunl\to\infty}\frac{\nunl}{\nlab}\to c\) for some
constant $c>0$, neither of the rate improvement vanishes for
$\nlab,\nunl\to\infty$.


\begin{table}\centering
  \begin{tabular}{cccc}
    \toprule
    SNR Regime& Rate of growth of \(\nunl\) vs \(\nlab\)&
    $\ratel(\nlab, \nunl, \snr)$ & $\rateu(\nlab, \nunl, \snr)$\\
    \midrule
    \(s = o\br{\sqrt{1/\nunl}}\)& Any &
    \(c_{\textrm{SL}}\)  & \(0\)\\\midrule
    
    \multirow{3}{*}{fixed \(s>0\)}& \ \(\nunl = o\br{\nlab}\) & \(c_{\textrm{SL}}\)  & \(0\)\\
    &\ \(\nunl = \omega(\nlab)\) &0&\(c_{\mathrm{UL}}\)
    \\
    &\(\lim_{\nlab,\nunl\to\infty}\frac{\nunl}{\nlab}=   c\) & \(\br{\frac{1}{1+{c}\snr^2}}c_{\mathrm{SL}}\)&\(\br{\frac{\snr^2c}{1+\snr^2c}}c_{\mathrm{UL}}\) \\\bottomrule
  \end{tabular}
  \caption{SSL rate improvement over SL and UL+ for different
  regimes of $\snr, \nlab$ and $\nunl$, where $\ratel$ and \(\rateu\) are evaluated
  for \(\lim_{\nlab, \nunl \to\infty}\).}
  \label{table:rates}
  \vspace{-0.5cm}
\end{table}

As explained in Section~\ref{sec:setting-alg}, prior UL+ algorithms that work well in practice do not use the labeled data for the unsupervised learning step. Interestingly, for the case when both the labeled and unlabeled data are used for the unsupervised step, the trends in Table~\ref{table:rates} remain the same, only 
in the regime when $\frac{\nunl}{\nlab} \to c \in (0, \infty)$ the rate improvement changes by a small constant. 
More importantly, the main takeaway from~\Cref{table:rates} remains the same: SSL cannot achieve better rates than both UL+ and SL at the same time since there is no regime for which $h_l$ and $h_u$ are simultaneously $0$.

\section{Empirical improvements over the rate-optimal SSL-S algorithm}
\label{sec:better_ssl}

\Cref{sec:theory_main} implies that the SSL minimax rates can be achieved 
by switching 
between a minimax optimal SL and a minimax optimal \ulplus algorithm (SSL-S). 
Despite exhibiting optimal statistical rates, this algorithm does not make the most effective use of the available data: SL algorithms solely use the labeled data, whereas \ulplus learns the decision boundary using just the unlabeled data, and the labeled data is only used to assign a label to the different prediction regions. Alternatively, one could use both labeled and unlabeled data to learn the decision boundary. 
In this section, we investigate the following question empirically: 
Is it possible to improve upon the error of UL+ and SL simultaneously?

First, when the unlabeled dataset is large relative to the SNR, we expect that UL+ algorithms can perform well, and hence, SSL cannot improve significantly over UL+. This is the setting most commonly considered in prior works \citep{sohn, caron2020unsupervised, grill2020bootstrapbyol}. Similarly, for large labeled datasets, it is difficult for SSL to improve the already good performance of SL algorithms. Therefore, SSL is most likely to improve over both SL and UL+ in the intermediate regime, where the information captured in the unlabeled data about the labeling $Y|X$ (referred to as ``compatibility'' in \cite{balcanblumcomp}) is moderate (i.e.\ neither too large nor too small). This intermediate regime has not been the focus of prior experimental analyses of SSL.

In this section, we present experiments to show that indeed in this intermediate regime of $\snr, \nlab$ and $\nunl$, a remarkably simple algorithm, outlined in the next paragraph,
can outperform both SL and UL+ simultaneously. 
Other standard algorithms such as self-training, which have been shown to excel in practice \citep{Xie_2020_CVPR, sohn, grill2020bootstrapbyol}, can also improve over SSL-S. 
Consequently, it remains an exciting avenue for future work to derive tight analyses that characterize the improvement of such algorithms over SL/UL+.

\subsection{Algorithms in SSL setting}

In this section, we empirically compare the prediction accuracy of SSL algorithms with minimax optimal SL and \ulplus algorithms on synthetic and real-world datasets.
We use
$\thetasl=\frac{1}{\nlab} \sum_{i=1}^{\nlab} Y_i X_i$ as the SL
estimator and an Expectation-Maximization (EM) algorithm for UL (see~\Cref{sec:appendix_exp_details} for
details).

\begin{wrapfigure}{r}{0.4\textwidth}
  \vspace{-0.1cm}
  \begin{minipage}[t]{0.4\textwidth}
        \setlength{\algomargin}{0.4cm}
      \begin{algorithm}[H]
        \DontPrintSemicolon
        \small

     \SetKwInOut{KwInput}{Input}

     \KwInput{$\Dlab$, $\Dunl$, $t$}

     \KwResult{\(\thetasslw\)}

     $\thetasl \leftarrow \algsl(\Dlab)$ \;
     
     $\thetaulp \leftarrow \algulp(\Dlab, \Dunl)$ \;

     \(\thetasslw(t) = t \thetasl + (1 - t) \thetaulp \)

     \KwRet \(\thetasslw(t)\) \;

     \caption{SSL-W algorithm}
     \label{algo:ssl_weight}

    \end{algorithm}
  \vspace{-0.3cm}
  \end{minipage}

\end{wrapfigure}

A simple approach to improve upon the naive SSL Switching algorithm
is to construct a weighted ensemble of an SL and a \ulplus estimator, trained on $\Dlab$ and $\Dunl$, respectively, with a controllable weighting hyperparameter $t$.
We call this the \textbf{SSL Weighted algorithm~(SSL-W)} shown
in~\Cref{algo:ssl_weight}. If we choose the weight $t$ according to the performance of the SL and UL+ estimators, this weighted combination can outperform both SL and UL+ (see Appendix~\ref{appendix:weighted} for a related formal statement). 
In experiments, the weight $t$ is chosen to maximize the average margin computed on an unlabeled validation set. 



Further, we also evaluate the performance of
self-training, using a procedure similar to the one analyzed in \cite{pmlr-v151-frei22a}.
 We use a logistic regression estimator for
the pseudolabeling, and train logistic regression with a ridge penalty
in the second stage of the self-training procedure. Note that an
$\ell_2$ penalty corresponds to input consistency regularization
\citep{wei2021theoreticalssl} with respect to $\ell_2$ perturbations. The ridge coefficient is also chosen using an unlabeled validation set and the procedure described above.


\subsection{Empirical improvements of SSL over SL and UL+}
\label{sec:emp-improve}


We now discuss the experimental results on two types of datasets -- synthetic and real-world data.

  

\paragraph{Synthetic data}
For a symmetric and isotropic 2-GMM
distribution $\jointsnr$ over $\RR^2$, \Cref{fig:synth-error-gap-snr} shows the gap between SSL-W and SL or~\ulplus as a function of the
SNR $\snr$. The labeled and unlabeled set sizes are  $\nlab=20$ and $n_u =2000$, respectively. An alternative way to vary the relative amount of information about the conditional $Y|X$ captured in the labeled and unlabeled datasets is via the $\nicefrac{\nunl}{\nlab}$. In~\Cref{fig:synth-error-gap-nlab-main}, we vary $\nlab$ for fixed SNR $\snr=0.5$ and $\nunl=7000$. The error of the minimax optimal switching algorithm SSL-S corresponds to the point-wise minimum of the two curves.

\begin{figure}[t]
    \centering
    \begin{subfigure}[t]{0.45\textwidth}
    \includegraphics[width=\textwidth]{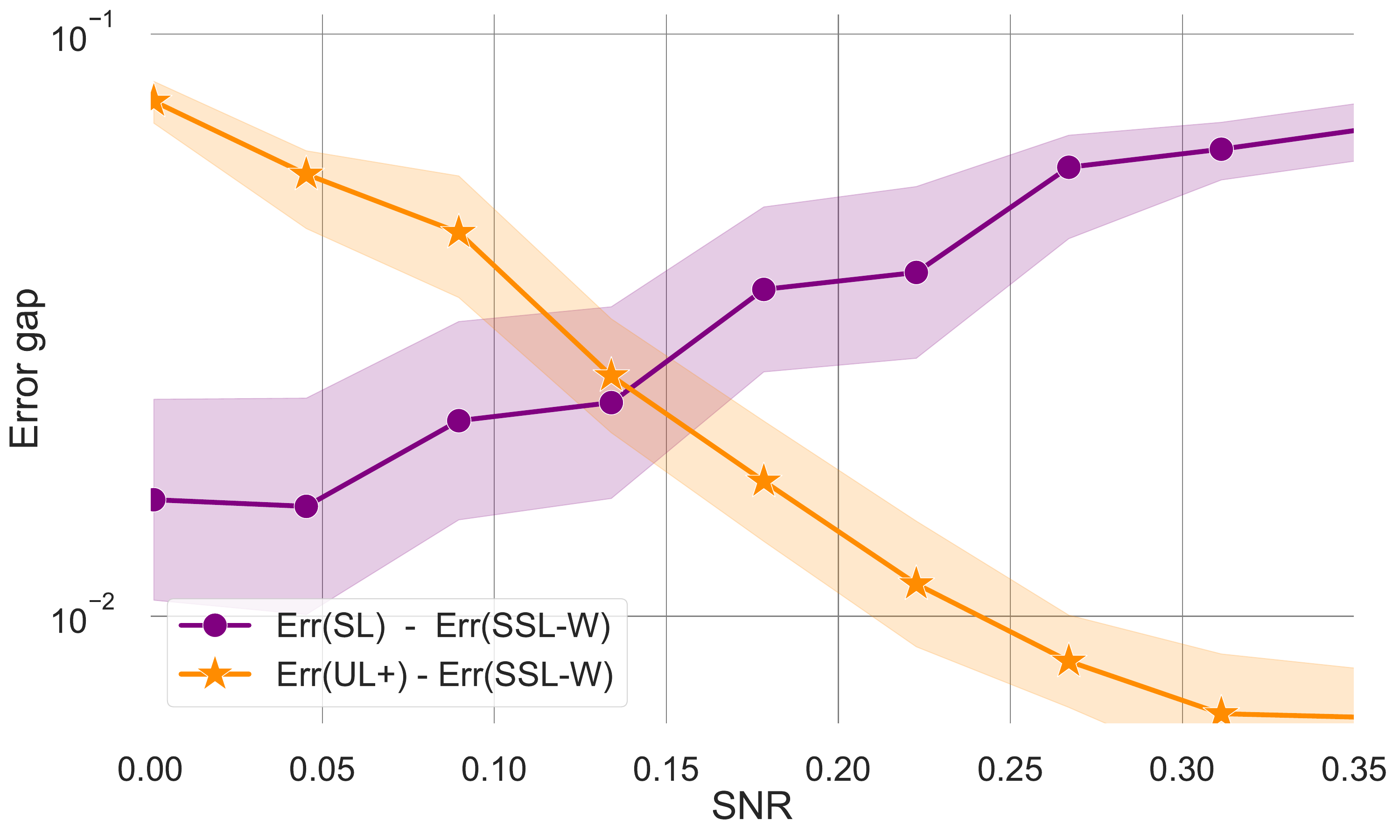}
    \caption{Varying the SNR $\snr$.}
    \label{fig:synth-error-gap-snr}
    \end{subfigure}
\begin{subfigure}[t]{0.45\textwidth}
    \centering
    \includegraphics[width=\textwidth]{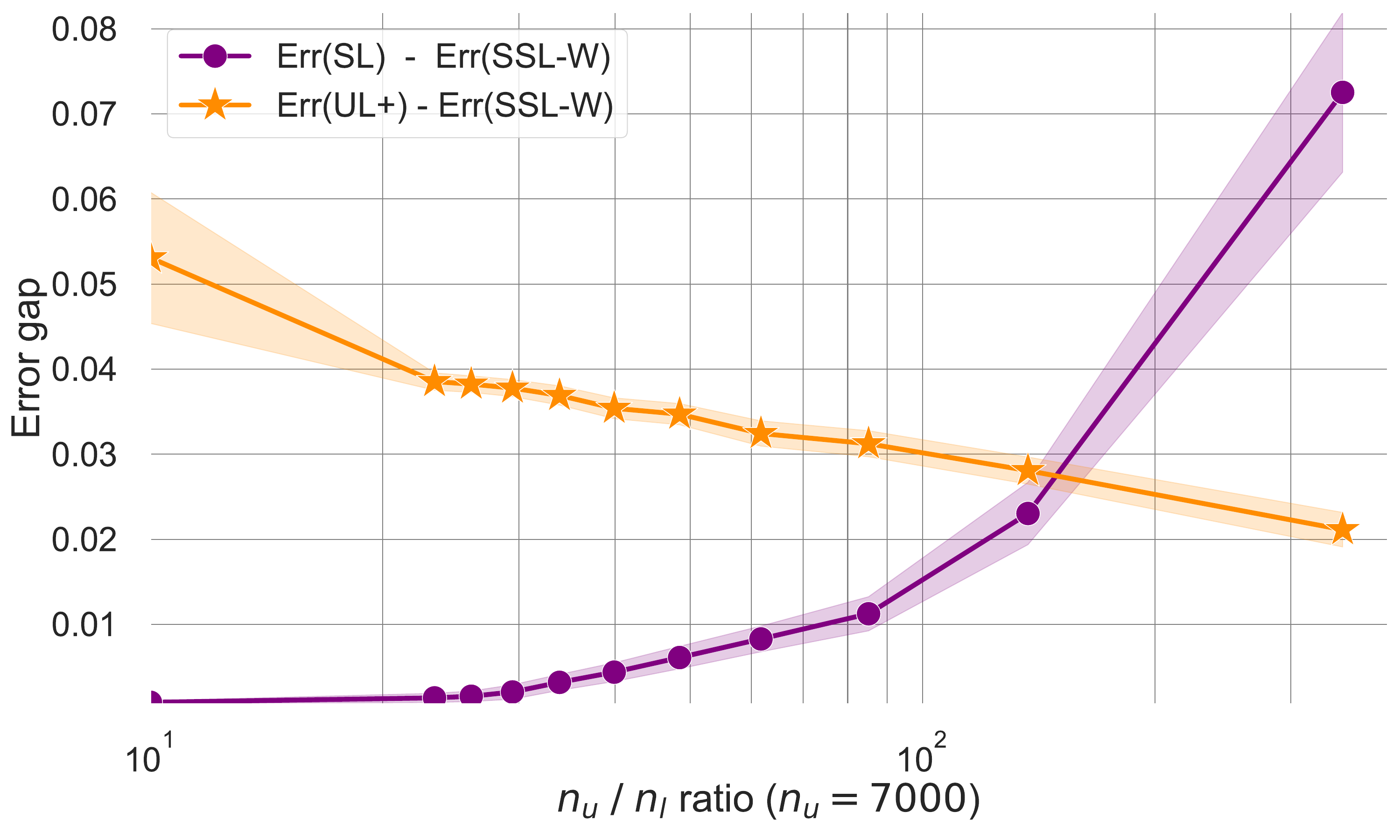}
    \caption{Varying the $\nicefrac{\nunl}{\nlab}$ ratio.}
    \label{fig:synth-error-gap-nlab-main}
\end{subfigure}

    \caption{SSL-W achieves lower error than both SL and UL+ 
    when varying the amount of information about $Y|X$ captured in the labeled and the unlabeled data. As suggested in~\Cref{table:rates}, the error of the SSL algorithm is more similar to SL for low SNR and low $\nicefrac{\nunl}{\nlab}$, and more similar to UL+ otherwise.}
    \label{fig:ssl_regimes_gmm}
\vspace{-0.4cm}
\end{figure}


First, observe that for the whole range of SNR values \(\snr\), SSL-W
always outperforms SL and UL+, and hence, also SSL-S. Further, as argued
in~\Cref{sec:analysis}, SSL-W improves more over UL+ for small values
of the SNR \(s\), and it improves more over SL for large values of the
SNR. Notably, the largest gains of SSL-W over SSL-S are in the intermediate regime of moderate SNR. 
The same trend occurs when varying the ratio $\nicefrac{\nunl}{\nlab}$ instead of the SNR, as illustrated in~\Cref{fig:synth-error-gap-nlab-main}.

\begin{wrapfigure}{r}{0.5\textwidth}
\vspace{-0.5cm}
    \centering
    \includegraphics[width=0.5\textwidth]{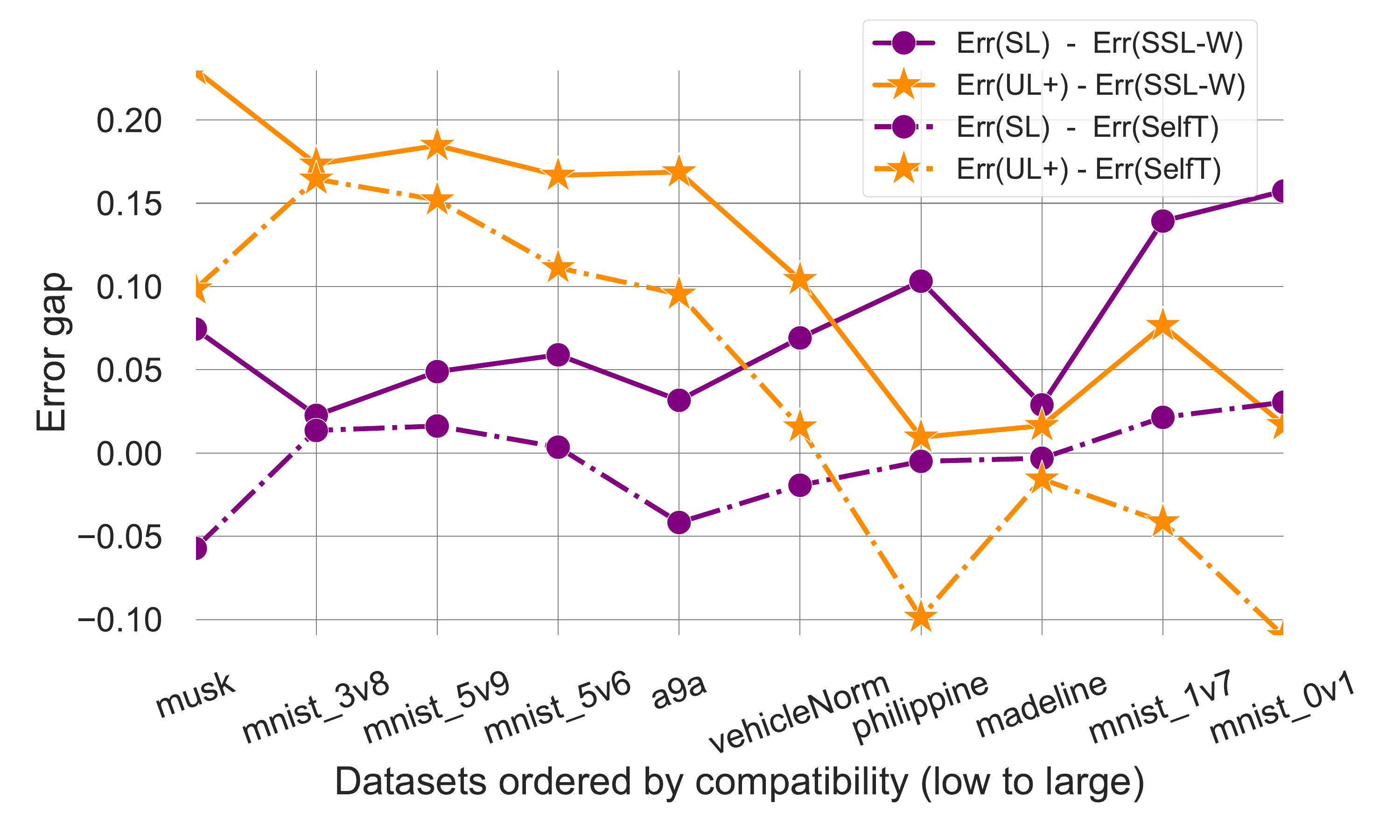}
    \caption{SSL-W and self-training, can improve upon the error of both SL and UL+ on real-world data.}
    \label{fig:real-error-gap-snr}
\vspace{-0.5cm}
\end{wrapfigure}

\paragraph{Real-world data} Next, we investigate whether similar conclusions can be drawn about SSL-W on real-world data.
We consider $10$ binary classification
real-world datasets: five from the OpenML
repository~\citep{OpenML2013} and five binary subsets of the MNIST
dataset~\citep{lecun2010}. For the MNIST subsets, we choose class
pairs that have a linear Bayes error varying between $0.1\%$ and
$2.5\%$.\footnote{We estimate the Bayes error of a dataset by training
a linear classifier on the entire labeled dataset.} From OpenML, we choose datasets that have a large enough number of samples compared to the ambient dimension
(see Appendix \ref{sec:appendix_exp_details} for
details on how we choose the datasets) and their Bayes errors vary between $3\%$ and $34\%$.

In the absence of the exact data-generating process, we quantify the
difficulty of SSL on real-world datasets using a notion of \emph{compatibility}, reminiscent of \cite{balcanblumcomp}. Specifically, we consider the compatibility given by $\rho^{-1}$ with $\rho := \frac{1}{2\sqrt{d}}\left(\misspec(\thetastar_{UL+}, \thetastar_{\text{Bayes}}) + \prederr(\thetastar_{\text{Bayes}})\right)$,
where $\misspec(\thetastar_{UL+}, \thetastar_{\text{Bayes}}) := \frac{\prederr(\thetastar_{UL+})-\prederr(\thetastar_{\text{Bayes}})}{\prederr(\thetastar_{\text{Bayes}})}$,
$d$ is the
dimension of the data, $\thetastar_{\text{Bayes}}$ is obtained via SL on the entire
dataset and $\thetastar_{\text{UL+}}$ denotes the predictor with
optimal sign obtained via UL on the entire dataset. Intuitively, this notion of compatibility captures both the Bayes error of a dataset, as well as how compatible the 2-GMM
parametric assumption actually is for the given data. 

\Cref{fig:real-error-gap-snr} shows the improvement in
classification error of both SSL-W and self-training
compared to SL and~\ulplus. The positive values of the error gap indicate that SSL-W outperforms both SL and UL+ even on real-world datasets.
This
finding suggests that the intuition
presented in this section carries over to more
generic distributions beyond just 2-GMMs, a prospect worth investigating thoroughly in future work. This conclusion is further supported by~\Cref{fig:more_switch_vs_weighted} in Appendix~\ref{sec:appendix_more_exp}, which shows that SSL-W improves over the error of the minimax optimal SSL-S algorithm, for varying values of $\nlab$. 




\section{Related work}
\label{sec:related_work}


\paragraph{General notion of when SSL can help}
\cite{balcanblumcomp} introduces a general ``compatibility'' score, 
to relate the space of
marginal distributions to the space of labeling functions.
While their
results suggest that under strong compatibility SSL may surpass the SL minimax rates, they offer no
comparison with UL/UL+.
Moreover, the paper does not discuss the minimax
optimality of the proposed SSL algorithms.

Furthermore, to characterize settings when SSL \emph{cannot} help, 
\cite{scholkopf2012} uses the causality framework to describe a family of distributions  
for which SSL algorithms do not offer any advantage over SL. In
essence, when the covariates, represented by $X$, act as causal
ancestors to the labels $Y$, the independent causal mechanism
assumption dictates that the marginal $\marginal$ offers no insights
about the labeling function.

\paragraph{SSL for regression} Beyond the theoretical works highlighted in~\Cref{sec:setting} that study SSL for classification,
\cite{10.1214/13-AOS1092azizyanssl, 10.5555/2981780.2981969SINGH}
present upper bounds for semi-supervised regression, which are
contingent on the degree to which the marginal $\marginal$ informs the
labeling function. This is akin to the results we derive in this
work. However, obtaining a minimax lower bound for semi-supervised
regression remains an exciting direction for future work. We refer to \citep{mey2020} for an overview of prior theoretical results for SSL.
On another note, even though SSL does not enhance the rates of UL,
\cite{sula2022semi} demonstrate that labeled samples can bolster the
convergence speed of Expectation-Maximization within the context of
our study.




\section{Discussion and future work}

In this paper, we establish a tight lower bound for semi-supervised
classification within the class of 2-GMM distributions. Our findings
demonstrate that across all signal-to-noise ratios, SSL \emph{cannot} simultaneously improve the error rates of both SL and UL+ -- algorithms that do not fully use the labeled and unlabeled data to learn the decision boundary.  

At the same time, we observe that empirically, an improvement is still possible for linear classification both in the 2-GMM setting and for real-world datasets. These findings suggest that future benchmarks for SSL algorithms should also consider moderate regimes, where unlabeled data is not overabundant.
Moreover, our results imply that future theoretical analyses of SSL algorithms might require careful tracking of constant factors in the analysis to show improvements over both SL and UL+.

Finally, our theoretical analysis focuses exclusively on isotropic and
symmetric GMMs as a simple proof of concept, in particular as minimax optimality had been well-established for this setting in the context of SL and UL \citep{pmlr-v54-li17a, Wu2019RandomlyIE}.
It is important to explore in future work, whether there are distributions, also beyond linear models, for which SSL can provably improve upon SL also in small $\nunl$ regime. 

\section*{Acknowledgements}

AT was supported by a PhD fellowship from the Swiss Data
Science Center. AS was supported by the ETH AI Center during his stay at ETH Zurich, where the work originated. We also thank the anonymous reviewers for their
helpful comments.
\newpage

\bibliography{arxiv_ssl.bib}
\bibliographystyle{abbrv}

\newpage

\appendix
\section{Proof of Proposition ~\ref{prop:ul+main}}\label{appendix:ulplus}

In this section, we provide the formal statement of Proposition \ref{prop:ul+main} and its proof. 
\begin{proposition}[Fixing the sign of $\thetaul$]\label{prop:ul+}
    There exist universal constants $c_0, C_1, C_2, C_3, C_4 >0$ such that for $\nunl \geq (160/s)^2 d$, $d \ge 2$ and $\snr>0$ 
    \begin{align}
    &\EE \left[  \nrm{\thetaulp - \thetastar}\right] \leq C_1 \sqrt{\frac{d}{s^2 \nunl}} + C_2 s  e^{-\frac{1}{2} \nlab \snr^2 (1- \frac{c_0 }{\min\{\snr, \snr^2\}} \sqrt{\frac{d \log(\nunl)}{\snr^2 \nunl}})^2}, \hspace{10pt} \text{ and for $\snr \in (0, 1]$} \notag \\
    &\EE \left[ \excess (\thetaulp) \right] \leq C_3 e^{-\frac{1}{2}s^2}
         \frac{d\log (d\nunl)}{\snr^3 \nunl} + C_4 e^{-\frac{1}{2}  \snr^2 \nlab \left(1- \frac{c_0 }{ \snr^2} \sqrt{\frac{d \log(\nunl)}{\snr^2 \nunl}}\right)^2 }.   \notag
    \end{align}
\end{proposition}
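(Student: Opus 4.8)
The plan is to view $\thetaulp$ as the unsupervised estimator $\thetaul$ equipped with a sign chosen by the labeled data, and to control separately (i) the error of the better-oriented copy $\pm\thetaul$ and (ii) the probability that the selected sign is wrong. Everything rests on two elementary facts. Since $X=Y\thetastar+Z$ with $Z\sim\gauss(0,I_d)$, the supervised estimator satisfies $\thetasl=\thetastar+\frac{1}{\nlab}\sum_i Y_iZ_i\sim\gauss(\thetastar,\frac{1}{\nlab}I_d)$ and is independent of $\Dunl$; and the population covariance is $\Sigma=\thetastar\thetastar^\top+I_d$, whose leading eigenvector is $\thetastar/\snr$ with spectral gap exactly $\snr^2$.

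Step A (unsupervised concentration). On a high-probability event $G$ over $\Dunl$ I would establish, from concentration of $\hat{\Sigma}$ together with a Davis--Kahan bound using the gap $\snr^2$ and the concentration of the magnitude $\sqrt{(\hat{\lambda}-1)_+}$, the three estimates $\min_{\pm}\norm{\pm\thetaul-\thetastar}\lesssim\sqrt{d/(\snr^2\nunl)}$, $\sin\angle(\hat{v},\thetastar)\lesssim\frac{1}{\snr^2}\sqrt{d\log\nunl/\nunl}$, and $\norm{\thetaul}\lesssim\snr$. These are precisely the known UL guarantees \citep{Wu2019RandomlyIE,pmlr-v54-li17a}, and the hypothesis $\nunl\ge(160/s)^2 d$ (together with the conditions inherited from \Cref{prop:ul+main}) makes $\Pr[G^c]$ negligible once one pays the logarithmic factors.

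Step B (sign selection). Fix the correctly-oriented copy of $\thetaul$ and let $A$ be the event that $\sign(\thetasl^\top\thetaul)$ selects it. Writing $\mu:=|\thetastar^\top\thetaul|/\norm{\thetaul}=\snr\,|\cos\angle(\hat{v},\thetastar)|$, the angle bound of Step A gives $\mu\ge\snr(1-\epsilon)$ with the $\epsilon$ appearing in the statement (the small gap $\snr^2$ is what produces the inverse powers of $\snr$ inside $\epsilon$). Conditioning on $\Dunl\in G$, the inner product $\thetasl^\top\thetaul$ is Gaussian with mean $\thetastar^\top\thetaul$ and variance $\norm{\thetaul}^2/\nlab$, so the chance of selecting the wrong copy is $\Phi(-\sqrt{\nlab}\,\mu)\le\tfrac12 e^{-\frac12\nlab\snr^2(1-\epsilon)^2}$. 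This is the only place the labeled data enters, and sharply controlling $\mu$ through the eigenvector perturbation in the small-gap (low-SNR) regime is the main obstacle.

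Step C (assembly). I would split each expectation over $G\cap A$, $G\cap A^c$ and $G^c$. For the estimation error: on $G\cap A$ the estimator is the better copy, contributing the UL rate $\sqrt{d/(\snr^2\nunl)}$; on $G\cap A^c$ one has $\norm{\thetaulp-\thetastar}\le\norm{\thetaul}+\snr\lesssim\snr$, contributing $\snr\cdot\Pr[G\cap A^c]\lesssim\snr\,e^{-\frac12\nlab\snr^2(1-\epsilon)^2}$; and $G^c$ is negligible by Cauchy--Schwarz against the tiny $\Pr[G^c]$. For the excess risk I would use the closed form $\prederr(\theta)=\Phi(-\snr\cos\angle(\theta,\thetastar))$, so that $\excess(\theta)=\Phi(-\snr\cos\angle)-\Phi(-\snr)$; on $G\cap A$ a mean-value expansion gives $\excess\lesssim\phi(\snr)\,\snr\,(1-\cos\angle)\lesssim e^{-\snr^2/2}\,\snr\,\sin^2\angle\lesssim e^{-\snr^2/2}\frac{d\log(d\nunl)}{\snr^3\nunl}$, while on $G\cap A^c$ and $G^c$ the excess risk is at most $1$ and is multiplied by the exponentially small $\Pr[G\cap A^c]$. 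Summing the three pieces yields the two displayed bounds; the final check, using $\nlab>O(\log\nunl/\snr^2)$, is that the exponential sign-error terms are subordinate so that the stated rates are clean.
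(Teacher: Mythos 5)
Your proposal is correct and takes essentially the same route as the paper's proof: both decompose the error according to whether the labeled data recovers the oracle sign $\sign(\thetastar^\top \thetaul)$, bound the oracle-sign term by the known UL rates of \cite{Wu2019RandomlyIE, pmlr-v54-li17a}, and control the wrong-sign probability via the conditional Gaussianity of $\thetasl^\top \thetaul$ given $\thetaul$ combined with concentration of the cosine similarity between $\thetaul$ and $\thetastar$ (the paper invokes a corollary of Proposition~6 of \cite{NIPS2013_456ac9b0} where you invoke Davis--Kahan; in both cases this is the source of the $\snr$-dependent factor in the exponent). The remaining differences are bookkeeping rather than substance: the paper splits the sign-error probability unconditionally over a threshold $A$ and works with expectation bounds directly, whereas you condition on a good event over $\Dunl$ and re-derive the on-event excess-risk bound by a mean-value expansion of $\Phi$.
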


\begin{proof}
    
Recall that we consider the UL+ estimator $\thetaulp = \sign\left(\thetasl^\top  \thetaul\right)\thetaul$ where 
\begin{align*}
  \thetasl &=  \frac{1}{\nlab} \sum_{i=1}^{\nlab} Y_i X_i, \text{ and } \thetaul = \sqrt{(\hat{\lambda}-1)_+}\hat{v},
\end{align*}
and define $\sgnhat:=\sign\left(\thetasl^\top  \thetaul\right)$. Now let $\sgn := \sign ({{\thetastar}^\top \thetaul }) = \argmin_{\Tilde{\sgn}\in \{-1, +1\}} \nrm{\Tilde{\sgn} \thetaul - \thetastar}^2  $ to be the oracle sign.

Note that we can write the expected estimation error of $ \thetaulp$ as
\begin{align}
    \EE \left[  \nrm{\thetaulp - \thetastar} \right] &=  \EE \left[  \nrm{\sgnhat \thetaul - \thetastar} \right] \notag 
    =  \EE \left[  \indicator{\sgnhat = \sgn} \nrm{\sgn \thetaul - \thetastar} + \indicator{\sgnhat \neq \sgn} \nrm{\sgn \thetaul + \thetastar} \right] \notag \\
    &\leq  \EE \left[  \indicator{\sgnhat = \sgn} \nrm{\sgn \thetaul - \thetastar} \right] + \EE \left[ \indicator{\sgnhat \neq \sgn} ( \nrm{\sgn \thetaul -\thetastar }+ 2\nrm{\thetastar} ) \right] \notag \\
    &\leq  \EE \left[  \left(\indicator{\sgnhat = \sgn}+ \indicator{\sgnhat \neq \sgn} \right ) \nrm{\sgn \thetaul - \thetastar} \right] + 2 \EE \left[ \indicator{\sgnhat \neq \sgn}  \right] \nrm{\thetastar} \notag \\
    &\leq \EE \left[  \nrm{\sgn \thetaul - \thetastar} \right] + 2 \snr \prob{(\sgnhat \neq \sgn)}. \label{eq:ulpluserrdecomposition}
\end{align}
We can use from \cite{Wu2019RandomlyIE} 
that $\EE \left[  \nrm{\sgn \thetaul - \thetastar}\right] \lesssim \sqrt{\frac{d}{\snr^2 \nunl}} $ and hence it remains to bound the probability of incorrectly estimating the sign (permutation).
We define a few auxiliary normally distributed variables
$\Tilde{Z} \sim \mathcal{N} (\thetaul^\top \thetastar , \frac{1}{\nlab} \|\thetaul\|^2) )$, $Z' \sim \mathcal{N} (0 , \frac{1}{\nlab}\|\thetaul\|^2 )$ and $Z\sim\mathcal{N}(0,1)$. 
Then, since $\thetasl \sim \mathcal{N} (\thetastar, \frac{1}{\nlab} I_d )$ 
, we have that
\begin{align*}
    \prob{(\sgnhat \neq \sgn)} 
    &= \prob \left( \sign (\Tilde{Z}) \neq \sign\left({\thetastar}^\top  \thetaul\right)  \right)  \\
    &\leq \prob \left( Z' \geq  |\thetaul^\top \thetastar| \right) \\
     &= \prob{\left (Z \geq \sqrt{\nlab s^2 } S_C (\thetaul, \thetastar )\right)} ,\hspace{20pt} \text{} \notag \\
\end{align*}
where $S_C (\thetaul, \thetastar ) = \frac{|\thetaul^\top \thetastar|}{\nrm{\thetaul} \nrm{\thetastar}} $. 
 Therefore, for any $A$ we have:
     \begin{align}
    \prob{(\sgnhat \neq \sgn)} & \leq \prob(Z\geq \sqrt{\nlab s^2 } (1-A)) + \prob{\left (  S_C (\thetaul, \thetastar ) \leq 1 - A \right)} \notag \\
    & \leq e^{-\frac{1}{2} \nlab \snr^2 (1-A)^2} + \prob{\left (  S_C (\thetaul, \thetastar ) \leq 1 - A \right)}, \notag
\end{align}
where we used the Chernoff bound in the last step. 
Finally,  setting $ A = \frac{c_0 }{\min\{\snr, \snr^2\}} \sqrt{\frac{d \log(\nunl)}{\snr^2 \nunl}}$ as a corollary of Proposition 6 in \cite{NIPS2013_456ac9b0} for $\nunl \geq (160/s)^2 d$ we have $\prob{\left (  S_C (\thetaul, \thetastar ) \leq 1 - A \right)} \leq \frac{d}{\nunl} $ and therefore
\begin{align}
    \prob{(\sgnhat \neq \sgn)} &\leq e^{-\frac{1}{2} \nlab \snr^2 \left(1- \frac{c_0 }{\min\{\snr, \snr^2\}} \sqrt{\frac{d \log(\nunl)}{\snr^2 \nunl}}\right)^2} + \frac{d}{\nunl}. \label{eq:prob_sign_wrong}
\end{align}
Combining this result with Equation \eqref{eq:ulpluserrdecomposition} finishes the proof of the estimation error bound in the proposition, as for some $C_1, C_2 >0 $, the following holds:
\begin{align}
     \EE \left[  \nrm{\thetaulp - \thetastar}\right] &\leq C_1 \sqrt{\frac{d}{s^2 \nunl}} + C_2 s  e^{-\frac{1}{4} \nlab \snr^2 (1- \frac{c_0 }{\min\{\snr, \snr^2\}} \sqrt{\frac{d \log(\nunl)}{\snr^2 \nunl}})^2}. \notag
\end{align}
Similarly, we can decompose the expected excess risk as follows: 
\begin{align}
    \EE \left[ \excess (\thetaulp) \right] =  \EE \left[ \excess (\sgnhat \thetaul ) \right] &=  \EE \left[  \indicator{\sgnhat = \sgn} \excess (\sgnhat \thetaul ) + \indicator{\sgnhat \neq \sgn} \excess (\sgnhat \thetaul ) \right] \notag \\
    &= \EE \left[  \indicator{\sgnhat = \sgn} \excess (\sgn \thetaul ) + \indicator{\sgnhat \neq \sgn} \excess ( - \sgn\thetaul ) \right] \notag \\
    &= \EE \left[ \excess (\sgn \thetaul ) \right] + \prob{(\sgnhat \neq \sgn)}, \label{eq:ub1}
\end{align}
where in Equation \eqref{eq:ub1} we upper bound the indicator function and the excess risk terms by 1. Finally, combining Equation \eqref{eq:prob_sign_wrong} with the upper bound in \cite{pmlr-v54-li17a} for $\EE \left[ \excess (\sgn \thetaul ) \right] $, 
we get that for some $C_3, C_4 > 0 $,
\begin{align}
    \EE \left[ \excess (\thetaulp) \right] \leq C_3 e^{-\frac{1}{2}s^2}
         \frac{d\log (d\nunl)}{\snr^3 \nunl} + C_4 e^{-\frac{1}{2}  \snr^2 \nlab \left(1- \frac{c_0 }{\min\{\snr, \snr^2\}} \sqrt{\frac{d \log(\nunl)}{\snr^2 \nunl}}\right)^2 }. \notag
\end{align}

\end{proof}

\section{Proof of SSL Minimax Rate for Estimation Error in~\Cref{thm:excess_main}}
\label{appendix:estimation}

In this section, we provide the proofs for the lower and upper bounds on the estimation error presented in Theorem \ref{thm:param_est_main}. The proof for the excess risk rates can be found in Appendix~\ref{appendix:excess}. 

\subsection{Proof of estimation error lower bound}
We first prove the estimation error lower bound in~\Cref{thm:param_est_main}. As discussed in \Cref{sec:setting}, consider the 2-GMM distributions from $\Pgmmsnr$, with isotropic components and identical covariance matrices.




   
Consider an arbitrary set of predictors $\coveringset = \{\vtheta_i\}_{i=0}^M$. 
We now apply Fano's method \citep{coverandthomas}. In particular, for any $\vtheta_0$, applying Theorem 3.1 in \citet{giraudintroduction} gives the following:
\begin{align}
\tiny
    \inf _{\algssl} &\sup _{\norm{\thetastar}=\snr}
    \EE_{\Dlab, \Dunl} \bs{\estimerr(\algssl(\Dlab, \Dunl), \jointsnr)} \notag \\
    &\geq \frac{1}{2} \min_{\substack{i,j \in [M] \\  i\neq j}} \distlb{\vtheta_i} {\vtheta_j}  \left(1-
    \frac{1+ \frac{1}{M} \sum_{i=1}^M D\left({\joint^{\, \vtheta_i}}^{\nlab} \, 
  {\marginalx^{\,\vtheta_i}}^{\nunl}\| {\joint^{\,\vtheta_0}}^{\nlab} \, 
  {\marginalx^{\, \vtheta_0}}^{\nunl}\right) }{\log (M)} \right) \notag \\
    & =  \frac{1}{2}\min_{\substack{i,j \in [M] \\  i\neq j}} \distlb{\vtheta_i} {\vtheta_j}  \left(1- \frac{1+ \frac{1}{M} \sum_{i=1}^M \nlab \kld {\joint^{\, \vtheta_i} }{\joint^{\, \vtheta_0}} + \nunl \kld {\marginalx^{\,\vtheta_i} }{ \marginalx^{\, \vtheta_0} }  }{\log (M)} \right) \label{eq:kl_independent} \\[2mm]
    &\geq  \frac{1}{2} \min_{\substack{i,j \in [M] \\  i\neq j}} \distlb{\vtheta_i} {\vtheta_j}  \left(1-
    \frac{1+ \nlab \displaystyle\max_{i \in [M]}  \kld {\joint^{\, \vtheta_i}
  }{\joint^{\, \vtheta_0}}  + \nunl \max_{i \in[M]} \kld
  {\marginalx^{\, \vtheta_i} }{ \marginalx^{\, \vtheta_0} } }{\log (M)} \right),\label{eq:kl_maximum}
\end{align}
where $\kld{\cdot}{\cdot}$ denotes the KL divergence. In Equation \eqref{eq:kl_independent}, we use the fact that the labeled and unlabeled samples are drawn i.i.d.\ from $\marginalx$ and $\joint$ and in Equation \eqref{eq:kl_maximum} we upper bound the average with the maximum. 
The next step of the proof consists in choosing an appropriate packing $\{\vtheta_i\}_{i=1}^M$ and $\vtheta_0$ on the sphere of radius $s$, i.e. $\frac{1}{s} \vtheta_i \in S^{d-1}$, that optimizes the trade-off between the minimum and the maxima in Equation \eqref{eq:kl_maximum}.

For the packing, we use the same construction that was employed by \cite{Wu2019RandomlyIE} for deriving adaptive bounds for unsupervised learning. This construction 
also turns out to lead to a tight lower bound for the semi-supervised 
setting. 
Let $c_0$ and $C_0$ be positive absolute constants and let $\Tilde{\coveringset} = \{\psi_1,..., \psi_M\}$ be a $c_0$-net on the unit sphere $S^{d-2}$ such that we have $|\Tilde{\coveringset}| = M\geq e^{C_0d}$. For an absolute constant $\alpha \in [0, 1]$ 
, we construct the following packing of the 
sphere of radius $s$ in $\R^d$:
    \begin{equation}
        \coveringset =\left \{\vtheta_i =  \snr 
        \begin{bmatrix}
        \sqrt{1-\alpha^2} \\
        \alpha \psi_i
        \end{bmatrix} \middle|  \psi_i \in \Tilde{\coveringset} 
        \right\}, \notag
    \end{equation}
and define $\vtheta_0 = [\snr, 0, ..., 0]$. Note that, by definition, $\distlb{\vtheta_i}{\vtheta_j} \geq c_0 \snr \alpha $, 
for any distinct $i,j\in[M]$, which lower bounds the first term in \eqref{eq:kl_maximum}. Furthermore, $\distlb{\vtheta_i} {\vtheta_0} \leq \sqrt{2} \alpha \snr$, for all $i \in [M]$.

We can now upper bound the maxima in  Equation \eqref{eq:kl_maximum} for $\vtheta_i$ in the packing. For the KL divergence between two GMMs with identity covariance matrices, we obtain 
\begin{equation}
\label{eq:jointkl}
        \kld{\joint^{\, \vtheta_i}}{\joint^{\, \vtheta_0}} = \frac{1}{2} \nrm{\vtheta_i - \vtheta_0}^2_2 \leq \alpha^2 s^2, \text{ for all }i=[M]. 
\end{equation}
Further, we can upper bound the KL divergence between marginal distributions, namely $\kld {\marginalx^{\vtheta_i} }{
\marginalx^{\vtheta_0} }$, using Lemma 27 in \cite{Wu2019RandomlyIE} that yields
\begin{equation}
    \label{eq:marginalkl}
        \max_{i\in [M]}\kld{\marginalx^{\,  \vtheta_i}}{\marginalx^{\,  \vtheta_0}} \leq C \max_{i\in[M]}\distlb{\frac{1}{s}\vtheta_i}{\frac{1}{s}\vtheta_0}^2 \snr^4 \leq 2 C \alpha^2 \snr^4. 
\end{equation}

Plugging Equations~\eqref{eq:jointkl} and \eqref{eq:marginalkl} into Equation \eqref{eq:kl_maximum}, we finally obtain the following lower bound
for the minimax error, which holds for any $\alpha \in [0,1]$:
\begin{align}
    \inf _{\algssl} &\sup _{\norm{\thetastar}=\snr}
    \EE_{\Dlab, \Dunl} \bs{\estimerr(\algssl(\Dlab, \Dunl), \thetastar)} 
   \geq \frac{1}{2} c_o \alpha \snr \left(1- \frac{1 + \nlab \snr^2\alpha^2 +
   \nunl C_1 \snr^4\alpha^2 }{C_0 d} \right). \notag 
\end{align}
%
%
%
Using the optimal value 
$\alpha =\min \left\{ 1, \sqrt{\frac{C_0 d - 1 }{ 3 \snr^2 \nlab+ 3 C_1
\snr^4 \nunl}} \right\}$ then concludes the proof.




\subsection{Proof of estimation error upper bound}
\label{sec:est-ub-proof}



We now prove the tightness of our lower bound by establishing the upper bound for the estimation error of the SSL Switching algorithm presented in
Algorithm~\ref{algo:ssl_switch} for a particular choice of the minimax optimal SL and UL+ estimators \cite{Wu2019RandomlyIE}. Let
\begin{align}
  \thetasl &=  \frac{1}{\nlab} \sum_{i=1}^{\nlab} Y_i X_i \notag\\
  \thetaulp &= \sign\left(\thetasl^\top  \thetaul\right)  \thetaul \text
  {, with } \thetaul = \sqrt{(\hat{\lambda}-1)_+}\hat{v},\label{eq:ulplus}
\end{align}
where $(\hat{\lambda}, \hat{v})$ is the leading eigenpair of the sample
covariance matrix $\hat{\Sigma} = \frac{1}{\nunl} \sum_{j=0}^{\nunl} X_j
X_j^T$ and we use the notation $(x)_+:=\max(0, x)$. 
The upper bound for the expected error incurred by the UL+ estimator is given in~\Cref{prop:ul+}.




For the SL estimator $\thetasl$, we apply standard results for Gaussian distributions to upper bound the estimation error that holds for any regime of $n$ and $d$:
  \begin{align}
    \label{eq:sl_ub}
    \EE_{\Dlab} \bs{\|\thetasl -
    \thetastar\|} \le \sqrt{\frac{d}{\nlab}}.
  \end{align}
Using Equation~\eqref{eq:sl_ub} and Proposition~\ref{prop:ul+} and switching
between $\thetasl$ and $\thetaulp$ according to the conditions in
Algorithm~\ref{algo:ssl_switch} that pick the better performing of the two algorithms depending on the regime,
we can show that
there exist universal constants $C, c_0>0$ such that for $0\leq \snr \leq 1$, $d\geq 2$ and $\nunl \geq (160/s)^2 d$,
it holds that
\begin{equation}
\label{eq:sslupper}
        \EE \left[  \nrm{\thetassls - \thetastar} \right] \leq  C 
        \min \left\{\snr, {\sqrt{\frac{d}{\nlab}}}, \sqrt{\frac{d}{s^2 \nunl}} + s  e^{-\frac{1}{2} \nlab \snr^2 \left(1- \frac{c_0 }{\snr^2}\sqrt{\frac{d \log(\nunl)}{\snr^2 \nunl}}\right)^2} \right\} ,
\end{equation}
where the expectation is over $\Dlab \sim
    \left(\jointsnr\right)^{\nlab}$ and $\Dunl \sim
  \left(\marginalsnr\right)^{\nunl}$.
  
\paragraph{Matching lower and upper bound.}

When $\nlab > O(\frac{\log (\nunl)}{\snr^2})$, the exponential term becomes negligible and the first additive component dominates in the last term in the right-hand side of Equation~\eqref{eq:sslupper}.
Basic calculations then yield that the expected error of the switching algorithm is upper bounded by $C' \min\left\{s,  \sqrt{\frac{d}{\nlab+\snr^2 \nunl}}\right\}$ 
for some constant $C'$, which concludes the proof of the theorem.

\section{Proof of SSL Minimax Rate for Excess Risk in~\Cref{thm:excess_main}} \label{appendix:excess}

In this section, we prove the minimax lower bound on excess risk for
an algorithm that uses both labeled and unlabeled data and a
matching~(up to logarithmic factors) upper bound.

\subsection{Proof of excess risk lower bound}
We first prove the excess error minimax lower bound
in~\Cref{thm:excess_main}, namely we aim to show that there exists a constant $C_0 > 0$ such
that for any $\snr>0$, $\nunl, \nlab\geq 0$ and $d \geq 4$,  we have
\begin{equation}\label{eq:lb-est-err-main}
    \inf_{\algssl} \sup _{\left\|\thetastar\right\|=\snr} \EE \left[ \errssl
    \right] \geq C_0 e^{-\snr^2/2} \min \left\{  \frac{d}{\snr \nlab + \snr^3
  \nunl} , s\right\},
\end{equation}
where the expectation is over $\Dlab \sim
\left(\jointsnr\right)^{\nlab}$ and $\Dunl \sim
\left(\marginalsnr\right)^{\nunl}$. Our approach to proving this lower
bound is again to apply Fano's method~\citep{giraudintroduction} using the
excess risk as the evaluation method. The reduction from estimation to testing usually
hinges on the triangle inequality in a metric space. 
Since the excess risk does not satisfy the metric axioms, we employ a technique introduced in \citet{NIPS2013_456ac9b0} to derive an alternative sufficient condition for applying Fano's inequality.

Let $\vtheta_1, \ldots, \vtheta_M \in \thetaset$, $M \geq 2$, and $\gamma>0$. If for all $1 \leq i \neq j \leq M$ and $\thetahat$, 
\begin{equation}
\label{eq:packass}
     \excess \left( \thetahat, \vtheta_i \right)
     <\gamma \text {\,\,   implies  \,\, } \excess \left( \thetahat, \vtheta_j \right)
     \geq \gamma,
\end{equation}
  then
  \begin{align}
      \inf _{\algssl} \max _{i \in[0 . . M]} \EE &\left[
      \excess \left( \algssl(\Dlab,\Dunl), \vtheta_i \right) \right]  \\ 
      &\geq \gamma  
      \left(1 - \frac{1+ \nlab \displaystyle\max_{i \neq j}  \kld {\joint^{\, \vtheta_i} }{\joint^{\, \vtheta_j}}  + \nunl \max_{i \neq j} \kld {\marginalx^{\, \vtheta_i} }{ \marginalx^{\, \vtheta_j} } }{\log(M)} \right), \notag
  \end{align}
where the expectation is over $\Dlab \sim
\left(\joint^{\, \vtheta_i} \right)^{\nlab}$ and $\Dunl \sim
\left(\marginal^{\, \vtheta_i} \right)^{\nunl}$. 

In order to get a lower bound,
we again pick $\vtheta_i,\dots,\vtheta_M$ to be an appropriate packing,  so that the condition in ~\Cref{eq:packass} can be satisfied. 
For this purpose, we can simply use the construction from \cite{pmlr-v54-li17a}, which was previously used to obtain tight bounds for supervised and unsupervised settings. 
Let $\sparsparam = (d-1)/6 $. By Lemma
4.10 in~\cite{massart2007}, there exists a set $\Tilde{\coveringset}
= \{\psi_1, \ldots, \psi_{M}\}$, such that $\nrm{\psi_i}_0 =
\sparsparam $, $\psi_i \in \{0, 1\}^{d-1}$, the Hamming distance
$\delta\left(\psi_i, \psi_j\right)> \sparsparam / 2$ for all $1 \leq
i<j \leq M = |\Tilde{\coveringset}| $, and $\log M \geq
\frac{\sparsparam}{5} \log \frac{d}{\sparsparam} \geq d \log(6) /60 =
c_1 d $.

Define
\begin{equation}
        \coveringset =\left \{\vtheta_i = 
        \begin{bmatrix}
        \sqrt{\snr^2 - p \alpha^2}\\
        \alpha \psi_i
        \end{bmatrix} \middle|  \psi_i \in \Tilde{\coveringset} 
        \right\} \notag
    \end{equation}
for some absolute constant $\alpha$. Note that since $\nrm{\vtheta_i} = \snr$ and $\nrm{\vtheta_i - \vtheta_j}^2 = \alpha^2 \delta\left(\psi_i, \psi_j\right)$, we have  
\begin{equation}\label{eq:theta_i_j_diff}
    \frac{\sparsparam \alpha^2}{2} \leq \nrm{\vtheta_i - \vtheta_j}^2 \leq 2 \sparsparam \alpha^2 
\end{equation}
and  
\begin{equation}\label{eq:theta_i_j_ip}
    \snr^2 -\sparsparam \alpha^2 \leq \vtheta_i^\top \vtheta_j \leq \snr^2 -\sparsparam \alpha^2/4 . 
\end{equation}
First, we show that the excess risk satisfies the condition in~\Cref{eq:packass}.
As in the proof of Theorem~1 in~\cite{pmlr-v54-li17a}, we
have that for any \(\vtheta\), 
\begin{align}
    \excrisk(\vtheta, \vtheta_i) +  \excrisk(\vtheta, \vtheta_j) 
    \geq 2c_0 e^{-\nicefrac{\snr^2}{2}} \frac{\sparsparam \alpha^2 }{\snr}, \notag
\end{align}
and thus, for all $i$ and $j\neq i$, it holds that
\begin{equation}\label{eq:excess-gen-hyp}
    \excrisk(\vtheta, {\vtheta_i}) \leq c_0 e^{-\nicefrac{\snr^2}{2}} \frac{\sparsparam \alpha^2 }{\snr} \implies \excrisk(\vtheta, \vtheta_j) \geq c_0 e^{-\nicefrac{\snr^2}{2}} \frac{\sparsparam \alpha^2 }{\snr}. 
\end{equation}
Then since the condition in~\Cref{eq:packass} is satisfied, we
obtain
\begin{equation}
    \label{eq:fano-general}
    \begin{aligned}
    \inf_{\algssl} &\sup _{\left\|\thetastar\right\|=\snr} \EE_{\Dlab, \Dunl} \left[ \errssl \right] 
     \geq \inf _{\algssl} \max _{i \in[0 . . M]} \EE \left[
      \excess \left( \algssl(\Dlab,\Dunl),  \vtheta_i \right) \right]  \\ 
   &\geq c_0 e^{-\nicefrac{\snr^2}{2}} \frac{\sparsparam \alpha^2 }{\snr} \left(1 - \frac{1+ \nlab \displaystyle\max_{i \neq j}  \kld {\joint^{\, \vtheta_i} }{\joint^{\, \vtheta_j}}  + \nunl \max_{i \neq j} \kld {\marginalx^{\, \vtheta_i} }{ \marginalx^{\, \vtheta_j} } }{\log(M)} \right).
    \end{aligned}
\end{equation}
Next, we bound the KL divergences between the two joint distributions
and between the two marginals that appear in~\Cref{eq:fano-general}. For the joint distributions we have that:
\begin{equation}\label{eq:kl-joint}
\begin{aligned}
    \kld {\joint^{\, \vtheta_i} }{\joint^{\, \vtheta_j}} &= \frac{1}{2}  \nrm{\vtheta_i - \vtheta_j}_2^2 \leq \sparsparam  \alpha^2,
\end{aligned}
\end{equation}
where the inequality follows from~\Cref{eq:theta_i_j_diff}. Using Proposition~24 in~\cite{NIPS2013_456ac9b0}, we bound the KL divergence between the two marginals
\begin{equation}\label{eq:kl-marginal}
\begin{aligned}
    \kld {\marginalx^{\, \vtheta_i} }{\marginalx^{\, \vtheta_j}} &\lesssim \snr^4 \left( 1- \frac{\vtheta_i^\top \vtheta_j}{\nrm{\vtheta_i} \nrm{\vtheta_j}}\right)  \leq \sparsparam \snr^2 \alpha^2,
\end{aligned}
\end{equation}
where the inequality follows from~\eqref{eq:theta_i_j_ip}.
Plugging~\eqref{eq:kl-joint} and~\eqref{eq:kl-marginal} into~\eqref{eq:fano-general} and setting $$\alpha^2 = c_3 \min \left \{ \frac{c_1 d - \log 2}{8 (\sparsparam \nlab +\snr^2 \sparsparam \nunl)}, \frac{\snr^2}{\sparsparam} \right \}, $$

gives the desired result
\begin{equation}
     \inf_{\algssl} \sup _{\left\|\thetastar\right\|=\snr} \EE_{\Dlab, \Dunl} \left[ \errssl
    \right] \gtrsim e^{-\snr^2/2} \min \left\{  \frac{d}{\snr \nlab + \snr^3 \nunl} , s\right\}. \notag
\end{equation}

\subsection{Proof of excess risk upper bound}   
Next, we prove the upper bound on the excess risk of the SSL switching
estimator \(\thetassls\) output by Algorithm~\ref{algo:ssl_switch}
with the supervised and unsupervised estimators defined
in~\Cref{sec:est-ub-proof}  to show the tightness
of the excess risk lower bound in~\Cref{thm:excess_main}.  
{Using same arguments as in \Cref{sec:est-ub-proof}, excess risk upper bound from \citet{pmlr-v54-li17a} and \Cref{prop:ul+}}
there exist universal constants $C, c_0>0$ such that for $\snr >0$, 
$d\geq 2$ and for sufficiently large $\nunl$ and $\nlab$,
    \begin{equation}
        \EE \left[ \excess(\thetassls) \right] \leq  C e^{-\frac{1}{2}s^2}
        \min \left\{\snr, {\frac{d \log (\nlab)}{\snr \nlab}}, \frac{d\log (d\nunl)}{\snr^3 \nunl} +  e^{-\frac{1}{2}  \snr^2 \left( \nlab \left(1- \frac{c_0 }{\min\{\snr, \snr^2\}} \sqrt{\frac{d \log(\nunl)}{\snr^2 \nunl}}\right)^2 - 1 \right)} \right\} ,\notag
    \end{equation}
where the expectation is over $\Dlab \sim
    \left(\jointsnr\right)^{\nlab}$ and $\Dunl \sim
  \left(\marginalsnr\right)^{\nunl}$.

\section{Theoretical motivation for the SSL Weighted Algorithm}
\label{appendix:weighted}

In this section, we formalize the intuition how SSL-W 
introduced in~\Cref{sec:better_ssl} can achieve lower squared estimation error (up to sign permutation) compared to SSL-S. 
For this purpose we consider a slightly different SSL-W estimator compared to the one introduced in Section~\ref{sec:better_ssl} 
$$\thetasslwperm(t) := t \thetasl + (1 - t) \thetaulperm$$
which uses the UL estimator 
$\thetaulperm$ with the oracle sign, namely $\thetaulperm := \argmin_{\vtheta \in \{\thetaul, -\thetaul\}} \EE\bs{\norm{\vtheta-\thetastar}^2}$. Note that this UL estimator is not using labeled data to determine the sign. For a fair comparison, we compare $\thetasslwperm$ with the (oracle) switching estimator that also has access to the UL estimator with correct sign 
$$\thetassls^*:= \argmin_{\vtheta \in \{\thetasl, \thetaulperm\}} \EE\bs{\norm{\vtheta-\thetastar}^2}.$$

\begin{figure}[t]\centering
    \includegraphics[width=0.5\linewidth]{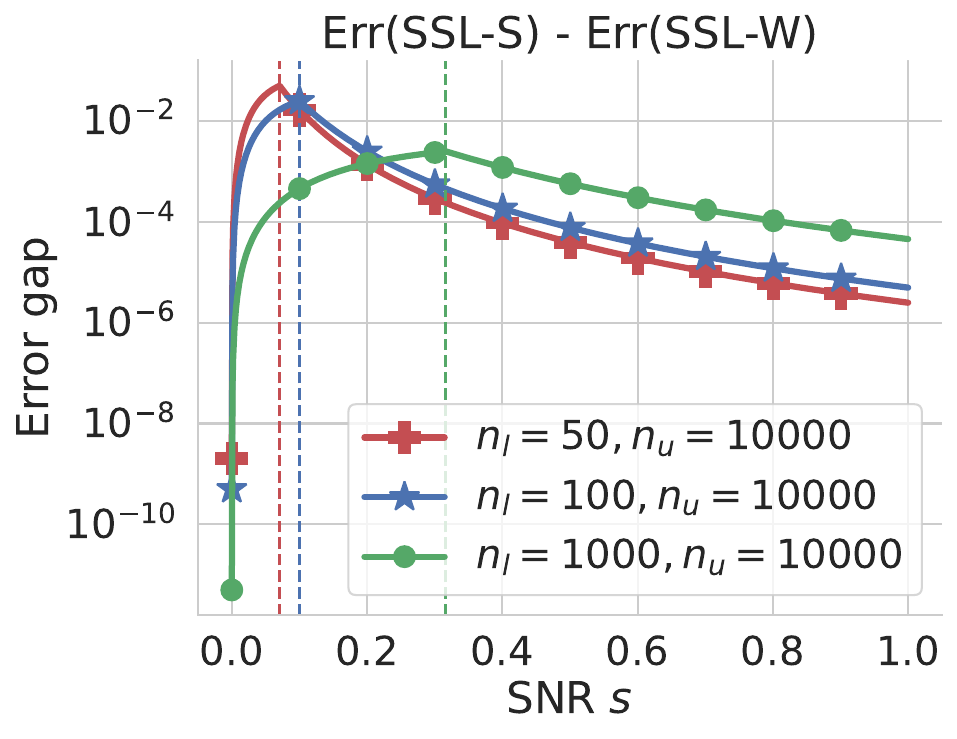}
    \vspace{-0.1cm}
    \caption{Estimation error gap between SSL-S and SSL-W as
        revealed by~\Cref{thm:param_est_diff_switch_weight} for
        varying SNR and $\nlab$ ($\nunl=10000$). The maximum gap is
        reached at the switching point, indicated by the vertical
        dashed lines.}
    \label{fig:sslw_theory}
    \vspace{-0.5cm}
\end{figure}

\Cref{thm:param_est_diff_switch_weight} now shows that there exists an optimal weight $t^*$ for which the SSL-W predictor achieves lower
estimation error than the SSL Switching predictor, 
$\thetassls^*$. Intuitively, it is simply due to the fact that SSL-S only uses either $\nlab$ or $\nunl$, while SSL-W actively uses both datasets. 

Note that \Cref{thm:param_est_diff_switch_weight}  holds for arbitrary distributions and estimators $\thetasl$ and $\thetaulperm$
as long as they are independent and one of them is unbiased. Therefore, future results that
derive upper bounds for SL and UL+ for other distributional
assumptions and estimators can seamlessly be plugged
into~\Cref{thm:param_est_diff_switch_weight}. By the same argument, \(\thetaweight\)
obtained by other SL and UL+ estimators can also be expected to
improve over the respective SL and UL+ estimators, given that one of
them is unbiased.

\begin{restatable}{proposition}{thmimprovementweight}  \label{thm:param_est_diff_switch_weight} 

    Consider a distribution \(\jointsnr\in \Pgmmsnr\) and let \(d\geq
    2\), and $\nlab, \nunl > 0$. Let $\thetasslwperm(t^*)$ be the SSL-W estimator introduced above. Then there exists a weight $t^*\in(0, 1)$
    for which
    \begin{align}
      \label{eq:gap}
      \EE\bs{\norm{\thetassls^*-\thetastar}^2} - \EE\bs{\norm{\thetasslwperm(t^*)-
        \thetastar}^2} =  \min \bc{r, \frac{1}{r}}
        \EE\bs{\norm{\thetasslwperm(t^*)- \thetastar}^2},
    \end{align}
    where $r=\frac{\EE\bs{ \norm{\thetaulperm - \thetastar}^2}}{\EE\bs{
\norm{\thetasl - \thetastar}^2}}$, and the expectations are over
\(\Dlab\sim\br{\jointsnr}^{\nlab},\Dunl\sim\br{\marginalsnr}^{\nunl}\).
    \end{restatable}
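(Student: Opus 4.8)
The plan is to reduce the whole statement to two scalars, $a := \EE\bs{\norm{\thetasl - \thetastar}^2}$ and $b := \EE\bs{\norm{\thetaulperm - \thetastar}^2}$, so that $r = b/a$. First I would write the weighted estimator's deviation as $\thetasslwperm(t) - \thetastar = t(\thetasl - \thetastar) + (1-t)(\thetaulperm - \thetastar)$ and take the expected squared norm, obtaining $\EE\bs{\norm{\thetasslwperm(t) - \thetastar}^2} = t^2 a + (1-t)^2 b + 2t(1-t)\,\EE\bs{\langle \thetasl - \thetastar,\, \thetaulperm - \thetastar\rangle}$.

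The only place the hypotheses enter is in killing the cross term, and I expect this to be the conceptual heart of the argument even though the computation is one line. By independence the expected inner product factors as $\langle \EE\bs{\thetasl - \thetastar},\, \EE\bs{\thetaulperm - \thetastar}\rangle$, and since one of the two estimators is unbiased, one of these mean-deviation vectors is $\vzero$; hence the cross term vanishes for every $t$, leaving the clean quadratic $t^2 a + (1-t)^2 b$. This is precisely what lets a convex combination profit from both datasets at once. The argument uses nothing about the GMM beyond independence and unbiasedness, which matches the remark that the proposition holds for arbitrary distributions and estimators.

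I would then minimize the convex quadratic $t \mapsto t^2 a + (1-t)^2 b$, obtaining the interior minimizer $t^* = \frac{b}{a+b} \in (0,1)$ and minimal value $\frac{ab}{a+b}$; this is the claimed weight. The switching estimator $\thetassls^*$ attains $\min\{a,b\}$ by its very definition as the better of the two estimators.

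It remains to match the gap, which I would do by cases. When $a \le b$ (so $r \ge 1$ and $\min\{r, 1/r\} = a/b$) the gap is $\min\{a,b\} - \frac{ab}{a+b} = \frac{a^2}{a+b}$, while $\min\{r,1/r\}\cdot\frac{ab}{a+b} = \frac{a}{b}\cdot\frac{ab}{a+b} = \frac{a^2}{a+b}$; the case $a > b$ is identical with $a$ and $b$ interchanged, giving $\frac{b^2}{a+b}$ on both sides. Thus the two sides agree in both regimes, establishing the identity. There is no genuine obstacle beyond the vanishing cross term — once that is in place the rest is elementary algebra.
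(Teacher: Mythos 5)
Your proposal is correct and follows essentially the same route as the paper's proof: the cross term is killed by independence plus unbiasedness of one estimator, the weight $t^* = \frac{b}{a+b}$ yields the harmonic-mean error $\frac{ab}{a+b}$, and the identity follows by comparing this with $\min\{a,b\}$. The only cosmetic difference is that you derive $t^*$ as the minimizer of the quadratic and split the final algebra into two symmetric cases, whereas the paper plugs in the same weight directly and handles both cases at once by writing $x=\min$, $y=\max$.
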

    
 Since the RHS of Equation~\eqref{eq:gap} is always positive,
  $\thetasslwperm(t^*)$ always outperforms $\thetaswitch^*$ as long as the
  conditions of~\Cref{thm:param_est_diff_switch_weight} are satisfied.
  The magnitude of the error gap between SSL-S and SSL-W depends on
  the gap between SL and UL+ (see Figure~\ref{fig:sslw_theory}). The
  maximum gap is reached for $\EE\bs{\norm{\thetaulperm - \thetastar}^2}
  \approx \EE\bs{\norm{\thetasl - \thetastar}^2}$ when SSL-W obtains
  half the error of SSL-S.

\begin{proof}
    


The first step in
proving~\Cref{thm:param_est_diff_switch_weight} is to express the
estimation error of $\thetasslwperm(t^*)$ in terms of the estimation
errors of $\thetasl$ and $\thetaulperm$.

Let $\thetahat_W{(t^*)} = {{t^*}} {\thetahat_1} + {(1-{t^*})} {\thetahat_2}$. By definition of \(\thetahat_W{(t^*)}\), for any two independent estimators $\thetahat_1,\thetahat_2$ and unbiased $\thetahat_1$ we have
    \begin{align*}
        \EE \left[\nrm{\thetahat_W{(t^*)} - \thetastar}^2 \right]
        = \EE\left[ {{t^*}}^2 \nrm{\thetahat_1 - \thetastar}^2 + {(1-{t^*})}^2 \nrm{\thetahat_2 - \thetastar}^2\right].
    \end{align*}
Plugging in  $t^* = \frac{\EE \bs{ \nrm{\thetahat_2 -
    \thetastar}^2} } { \EE \bs{\nrm{\thetahat_1 - \thetastar}^2} +
    \EE \bs{\nrm{\thetahat_2 - \thetastar}^2}}$  then yields that
 \[\EE
    \bs{\nrm{\thetahat_W{(t^*)} - \thetastar}^2} = \left(\frac{1}{\EE
    \bs{\nrm{\thetahat_1 - \thetastar}^2 }}+\frac{1}{\EE
    \bs{\nrm{\thetahat_2 - \thetastar}^2}}\right)^{-1}.\] 

Plugging in $\thetahat_1 = \thetasl$ and $\thetahat_2 = \thetaulperm$ we have $\EE
    \bs{\nrm{\thetasslwperm - \thetastar}^2} = \left(\frac{1}{\EE
    \bs{\nrm{\thetasl - \thetastar}^2 }}+\frac{1}{\EE
    \bs{\nrm{\thetaulperm - \thetastar}^2}}\right)^{-1}$. The proof then follows from calculating the difference between the
harmonic mean and the minimum of estimation errors of $\thetasl$ and
$\thetaulp$. Let $x,y \in \R_{+}$ and w.l.o.g. assume $x \leq y$. Then we have:
    \begin{equation}
        x - \left(\frac{1}{x} + \frac{1}{y} \right)^{-1} = x - \frac{xy}{x+y} = \frac{x^2}{x+y} = \frac{x}{y} \frac{xy}{x+y}. \notag
    \end{equation}
Applying this identity concludes the proof after choosing   
    $x= \min \left\{ \EE [
\nrm{\thetaulperm - \thetastar}^2] , \EE [
\nrm{\thetasl - \thetastar}^2] \right \}$ and \(y= \max \left\{ \EE [
\nrm{\thetaulperm - \thetastar}^2] , \EE [
\nrm{\thetasl - \thetastar}^2] \right \}\).

\end{proof}
\section{Simulation details}
\label{sec:appendix_exp_details}

\subsection{Methodology}

We split each dataset in a test set, a validation set and a training set. Unless otherwise specified, the
unlabeled set size is fixed to $5000$ for the synthetic experiments and $4000$
for the real-world datasets. The size of the labeled set $\nlab$ is varied in
each experiment. For each dataset, we draw a different labeled subset $20$ times
and report the average and the standard deviation of the error gap (or the
error) over these runs. The (unlabeled) validation set and the test set have $1000$ labeled
samples each.

We use logistic regression from Scikit-Learn \citep{scikit-learn} as the supervised
algorithm. We use the validation set to select the ridge penalty for SL. For the
unsupervised algorithm, we use an implementation of Expectation-Maximization
from the Scikit-Learn library. We also use the self-training algorithm from
Scikit-Learn with a logistic regression estimator. The best confidence threshold
for the pseudolabels is selected using the validation set. Moreover, the optimal
weight for SSL-W is also chosen with the help of the validation set. Since we use an unlabeled validation set, we need to employ an appropriate criterion for hyperparameter selection. Therefore, we select models that lead to a large (average) margin measured on the unlabeled validation set. 
We give
SSL-S the benefit of choosing the optimal switching point between SL and UL+ by
using the test set. Even with this important advantage, SSL-W (and sometimes
self-training) still manage to outperform SSL-S.

\subsection{Details about the real-world datasets}

\paragraph{Tabular data.} We select tabular datasets from the OpenML repository
\citep{OpenML2013} according to a number of criteria.  We focus on
high-dimensional data with $100 \le d < 1000$, where the two classes are not
suffering from extreme class imbalance, i.e.\ the imbalance ratio between the
majority and the minority class is at most $5$. Moreover, we only consider
datasets that have substantially more samples than the number of features, i.e.\
$\frac{n}{d} > 10$. In the end, we are left with 5 datasets, that span a broad
range of application domains, from ecology to chemistry and finance.

To assess the difficulty of the datasets, we train logistic regression on the
entire data that is available, and report the training error. Since we train on
substantially more samples than the number of dimensions, the predictor that we
obtain is a good estimate of the linear Bayes classifier for each dataset.

Furthermore, we measure the extent to which the data follows a GMM distribution
with spherical components. We fit a spherical Gaussian to data coming from each
class and use linear discriminant analysis (LDA) for prediction. We record the
training error (of the best permutation). Intuitively, this is a score of how
much our assumption about the connection between the marginal $\marginal$ and
the labeling function $P(Y|X)$ is satisfied. For
Figure~\ref{fig:real-error-gap-snr} we take $\thetastar_{\text{Bayes}}$ to be the linear Bayes classifier and
$\thetastar_{UL}$ the LDA classifier described above.\footnote{Note that we
refer to the LDA estimator as \emph{UL} since we use it as a proxy to assess how
well unsupervised learning can perform on each dataset.} If the data is almost linearly separable (i.e.\
$\prederr(\thetastar_{\text{Bayes}})\le 0.01$), then we simply take the linear Bayes error
as the compatibility.

\paragraph{Image data.} In addition to the tabular data, we also consider a number of datasets that
are subsets of the MNIST dataset \citep{lecun2010}. More specifically, we create
binary classification problems by selecting class pairs from MNIST. We choose
$5$ classification problems which vary in difficulty, as measured by the Bayes
error, from easier (e.g.\ digit 0 vs digit 1) to more difficult (e.g.\ digit 5
vs digit 9).  Table~\ref{table_stats} presents the exact class pairs that we
selected. To make the problem more amenable for linear classification, we
consider as covariates the $20$ principle components of the MNIST images.

\begin{table}[h]
\scriptsize
\centering

\begin{tabular}{lrrr}
\toprule
Dataset name &    $d$ & Linear classif. training error & LDA w/ spherical GMM training error\\
\midrule
mnist\_0v1  & 784 & 0.001& 0.009\\
mnist\_1v7  & 784 & 0.006& 0.036\\
madeline    & 259 & 0.344& 0.381\\
philippine  & 308 & 0.240& 0.318\\
vehicleNorm & 100 & 0.141& 0.177\\
mnist\_5v9  & 784 & 0.024& 0.045\\
mnist\_5v6  & 784 & 0.024& 0.042\\
a9a         & 123 & 0.150& 0.216\\
mnist\_3v8  & 784 & 0.042& 0.105\\
musk        & 166 & 0.037& 0.270\\
\bottomrule
\end{tabular}

\caption{Some characteristics of the datasets considered in our
experimental study.}
\label{table_stats}

\vspace{-0.5cm}
\end{table}


\section{More experiments}
\label{sec:appendix_more_exp}

In this section we present further experiments that indicate that the SSL
Weighted algorithm (SSL-W) can indeed outperform the minimax optimal 
Switching algorithm (SSL-S). 
Figure~\ref{fig:more_switch_vs_weighted} shows that there exists a positive error gap between SSL-S and SSL-W for a broad range of $\nlab$ values, for real-world data. The extent of the error
gap is determined by the $\frac{\nunl}{\nlab}$ ratio as well as the
signal-to-noise ratio that is specific to each data distribution. In addition, \Cref{fig:more_switch_vs_weighted} also
indicates that self-training can outperform SSL-W in some scenarios. It remains an exciting direction
for future work to provide a tight analysis of self-training that can indicate when
it outperforms both SL and UL+ simultaneously.

  

\begin{figure*}[t]
  \centering
  \begin{subfigure}[t]{0.32\textwidth}
    \centering
    \includegraphics[width=\textwidth]{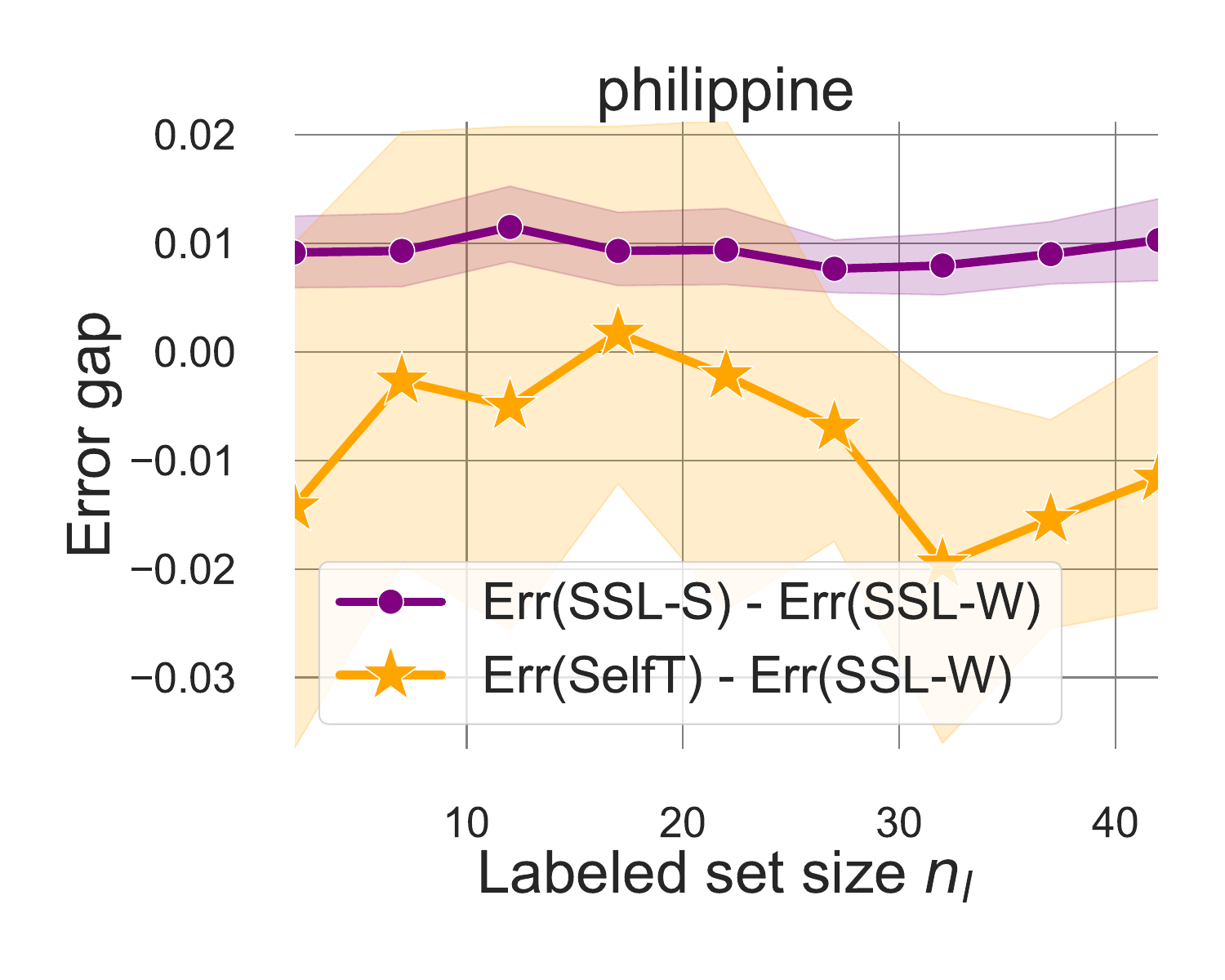}
  \end{subfigure}
  \begin{subfigure}[t]{0.32\textwidth}
    \centering
    \includegraphics[width=\textwidth]{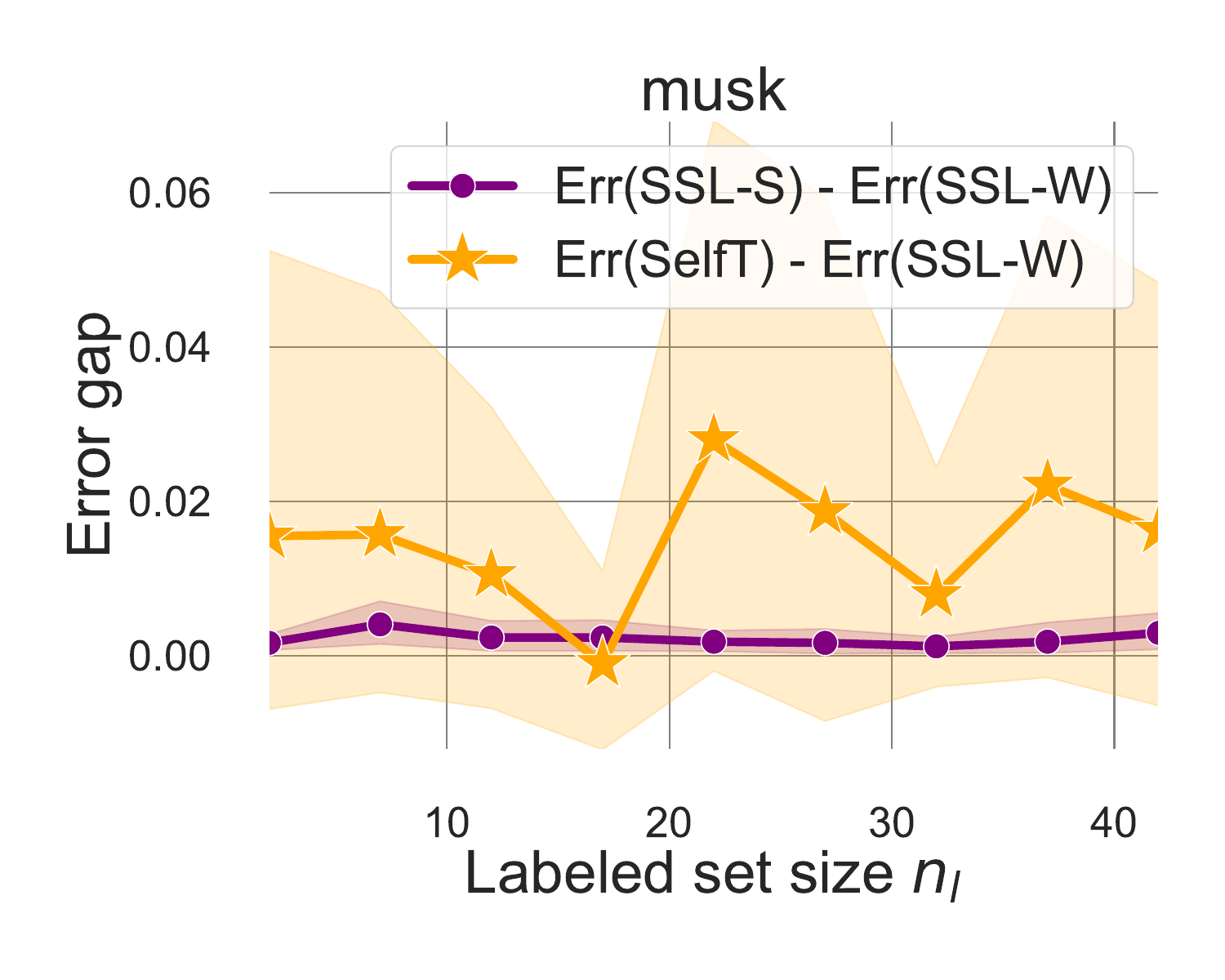}
  \end{subfigure}
  \begin{subfigure}[t]{0.32\textwidth}
    \centering
    \includegraphics[width=\textwidth]{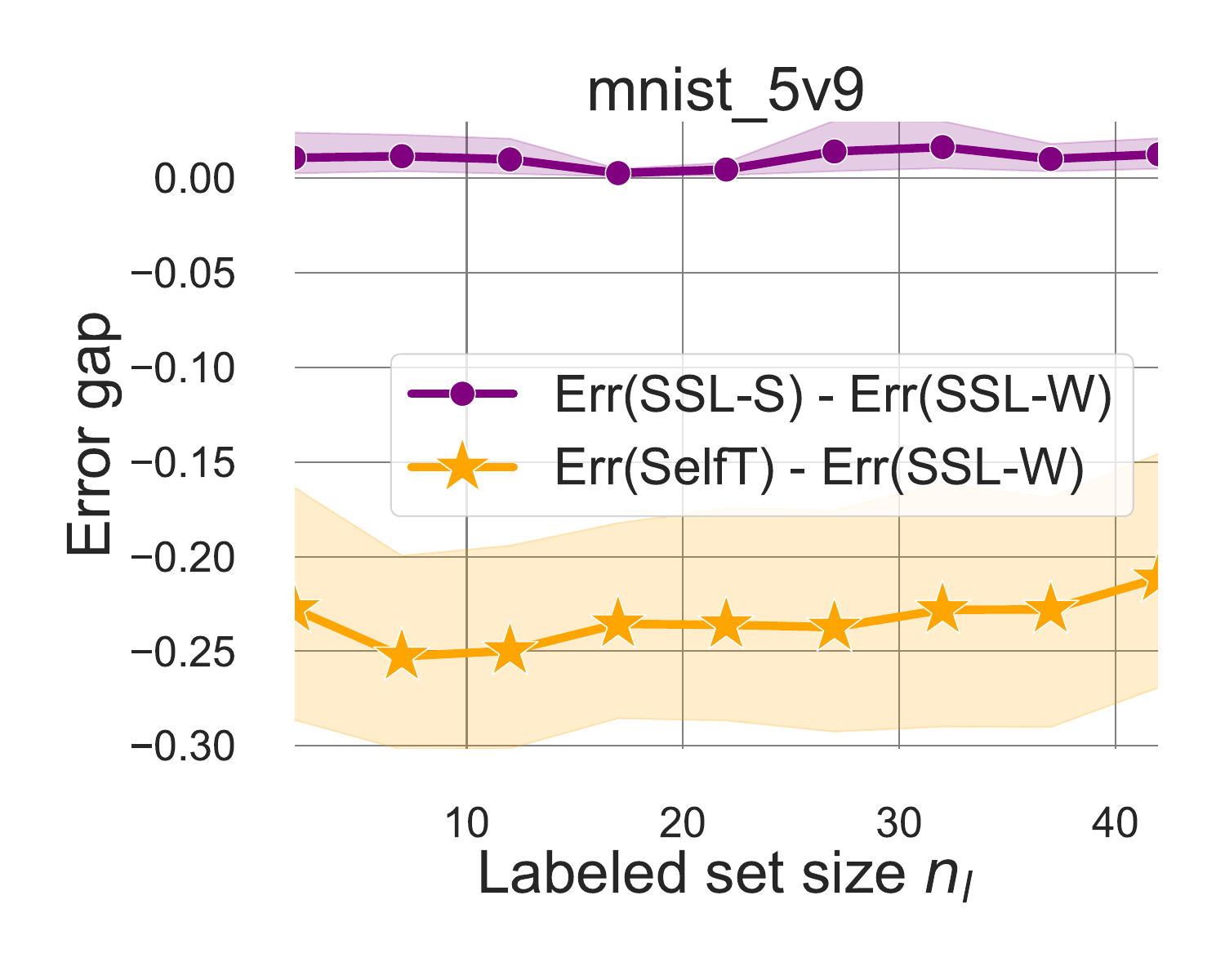}
  \end{subfigure}
  \vspace{-0.2cm}
  \begin{subfigure}[t]{0.32\textwidth}
    \centering
    \includegraphics[width=\textwidth]{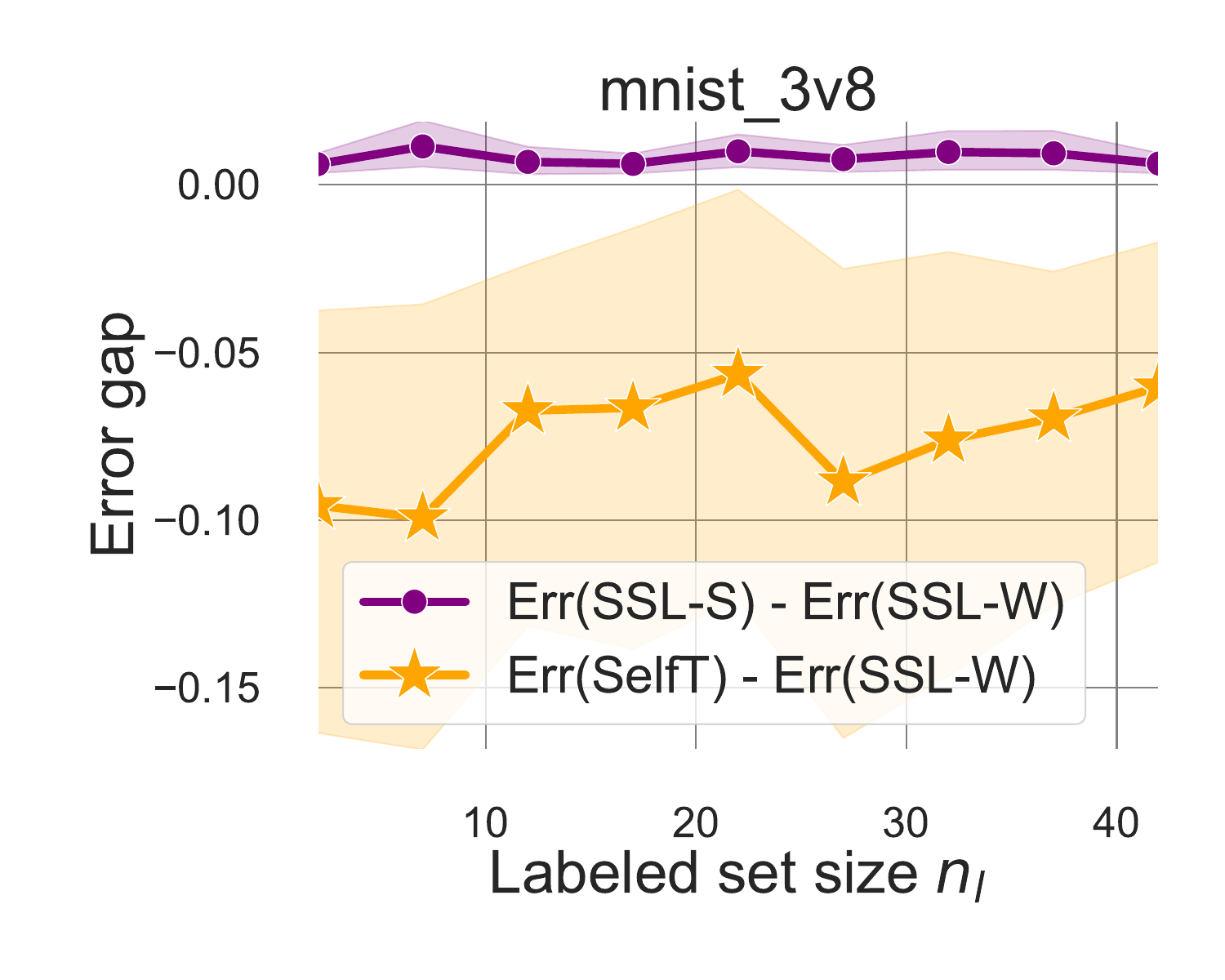}
  \end{subfigure}
  \begin{subfigure}[t]{0.32\textwidth}
    \centering
    \includegraphics[width=\textwidth]{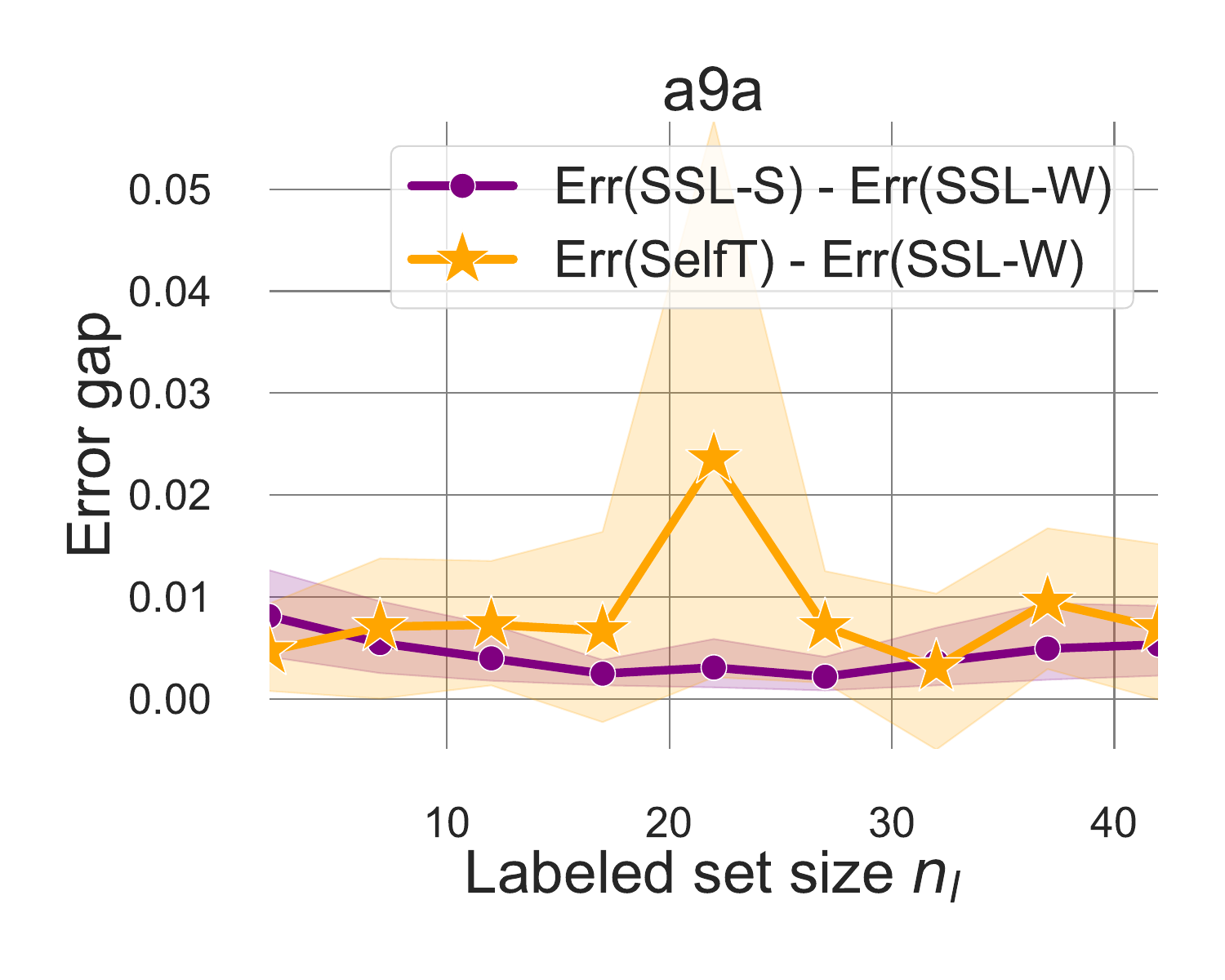}
  \end{subfigure}
  \begin{subfigure}[t]{0.32\textwidth}
    \centering
    \includegraphics[width=\textwidth]{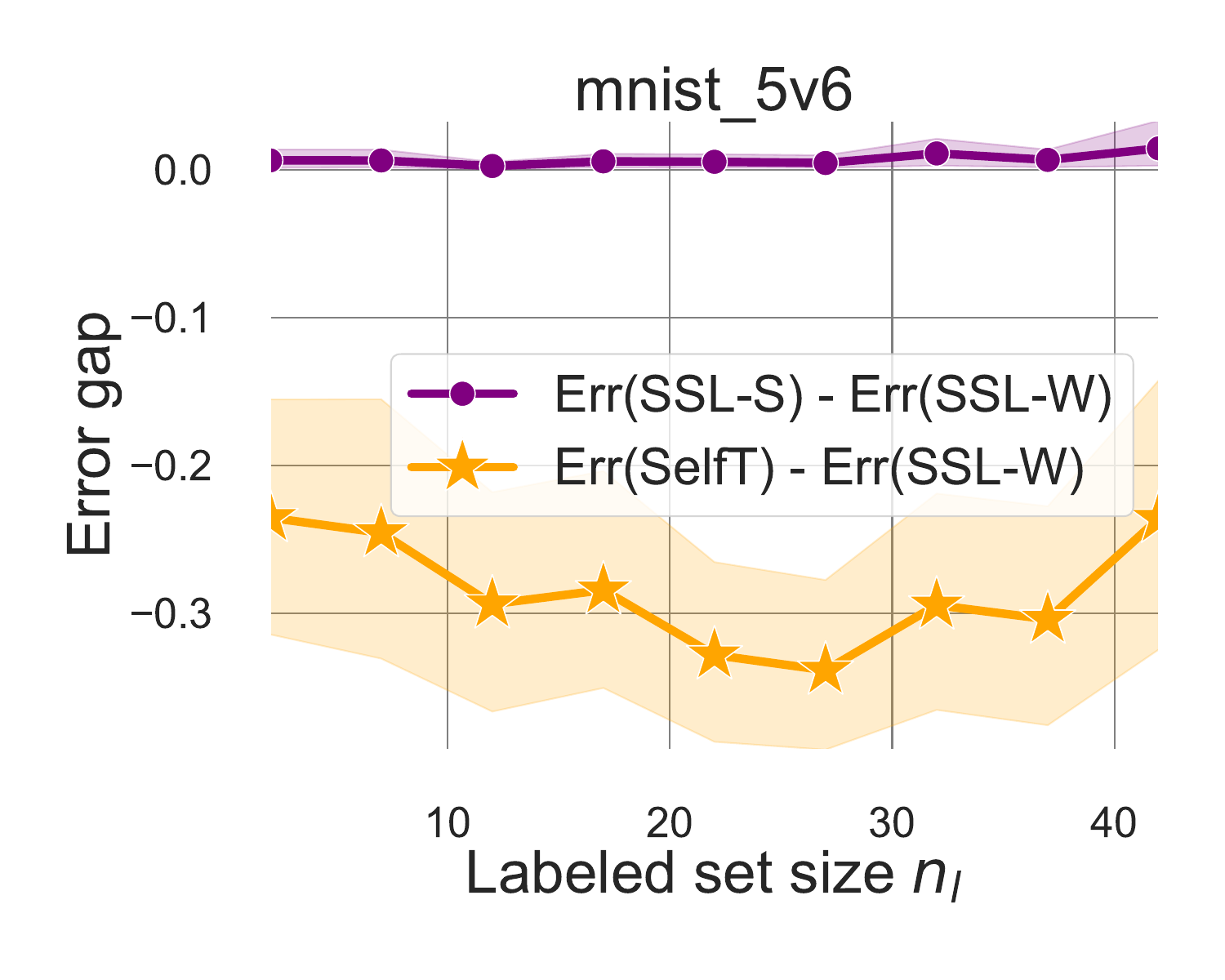}
  \end{subfigure}
  \vspace{-0.2cm}
  \begin{subfigure}[t]{0.32\textwidth}
    \centering
    \includegraphics[width=\textwidth]{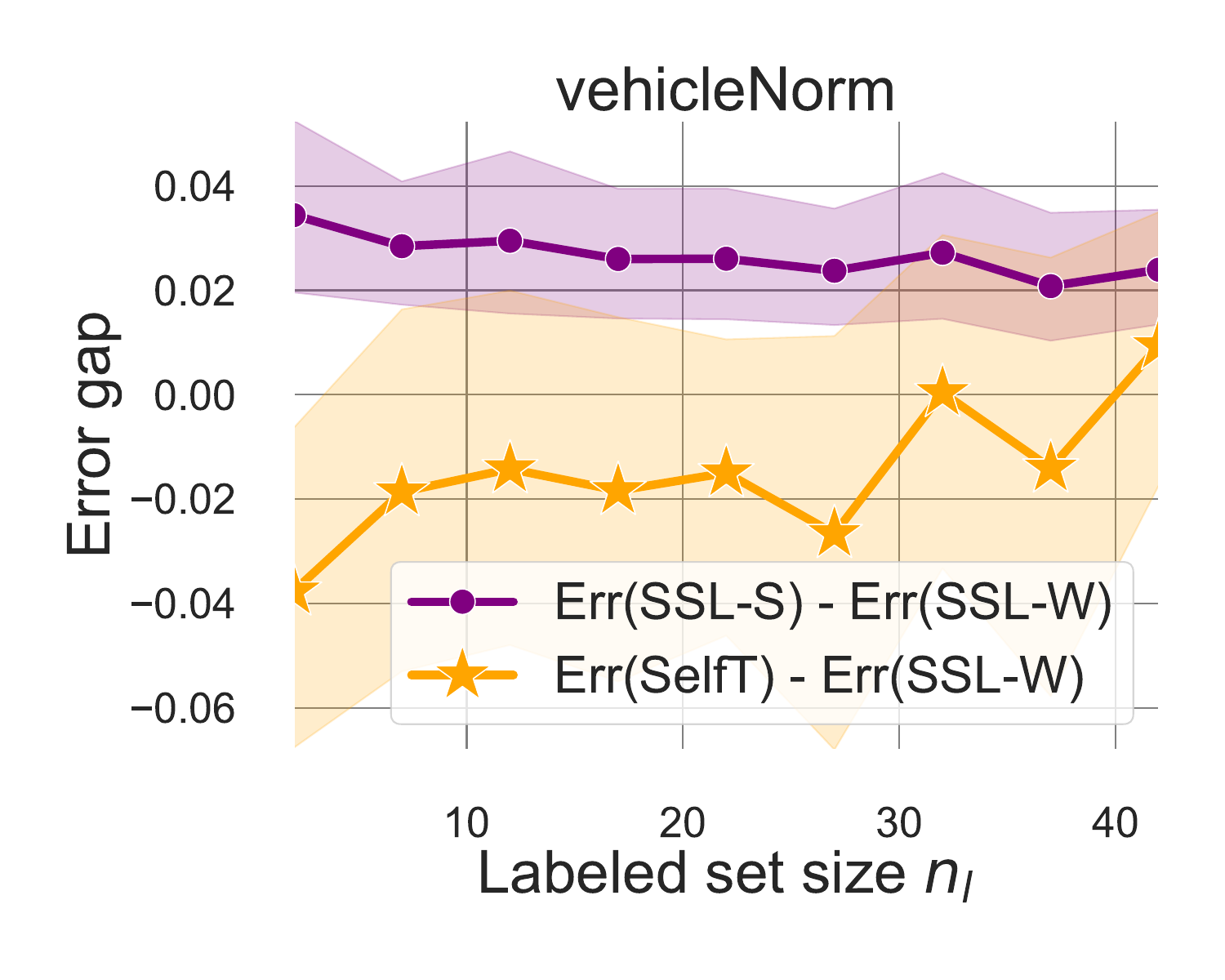}
  \end{subfigure}
  \begin{subfigure}[t]{0.32\textwidth}
    \centering
    \includegraphics[width=\textwidth]{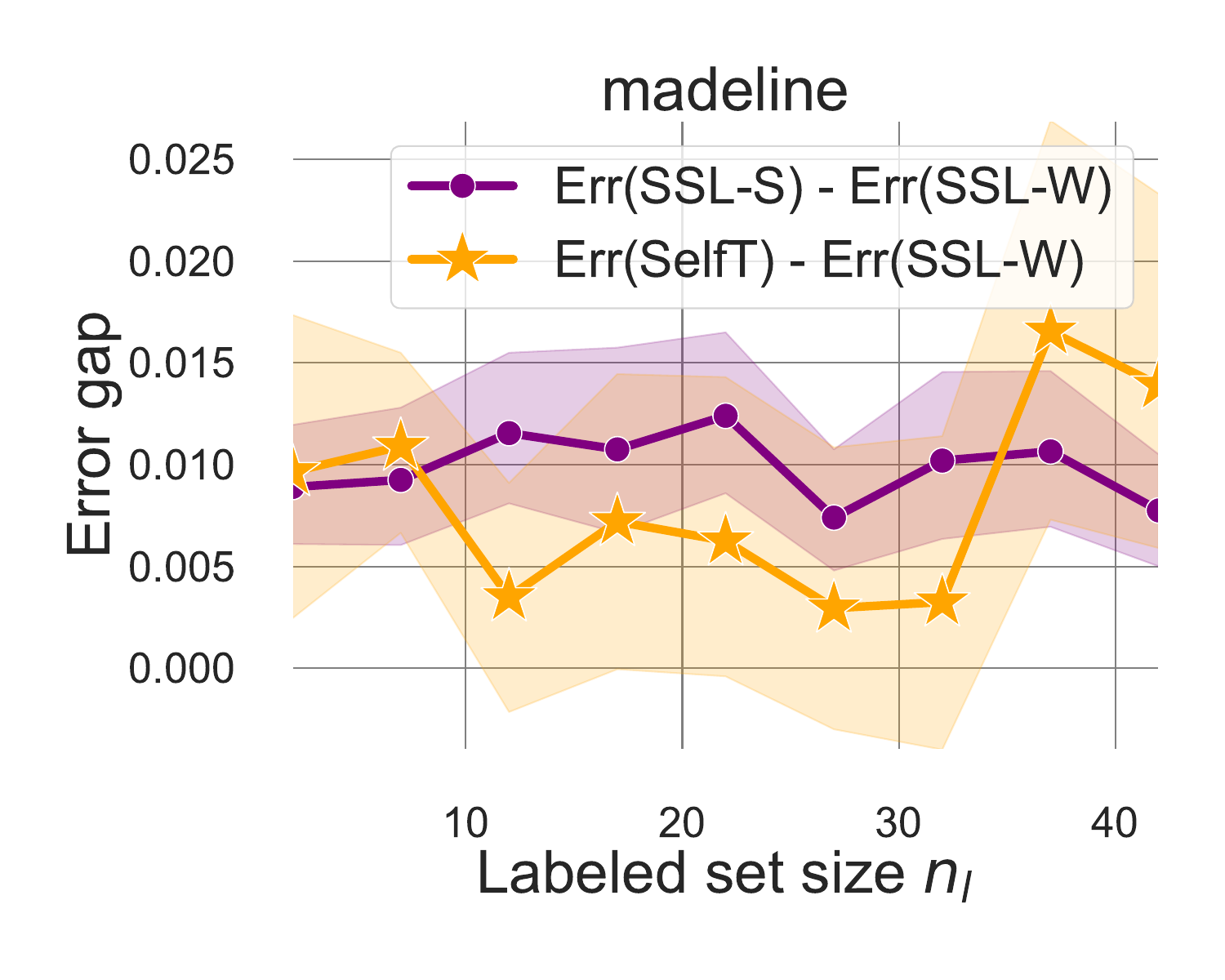}
  \end{subfigure}
  \begin{subfigure}[t]{0.32\textwidth}
    \centering
    \includegraphics[width=\textwidth]{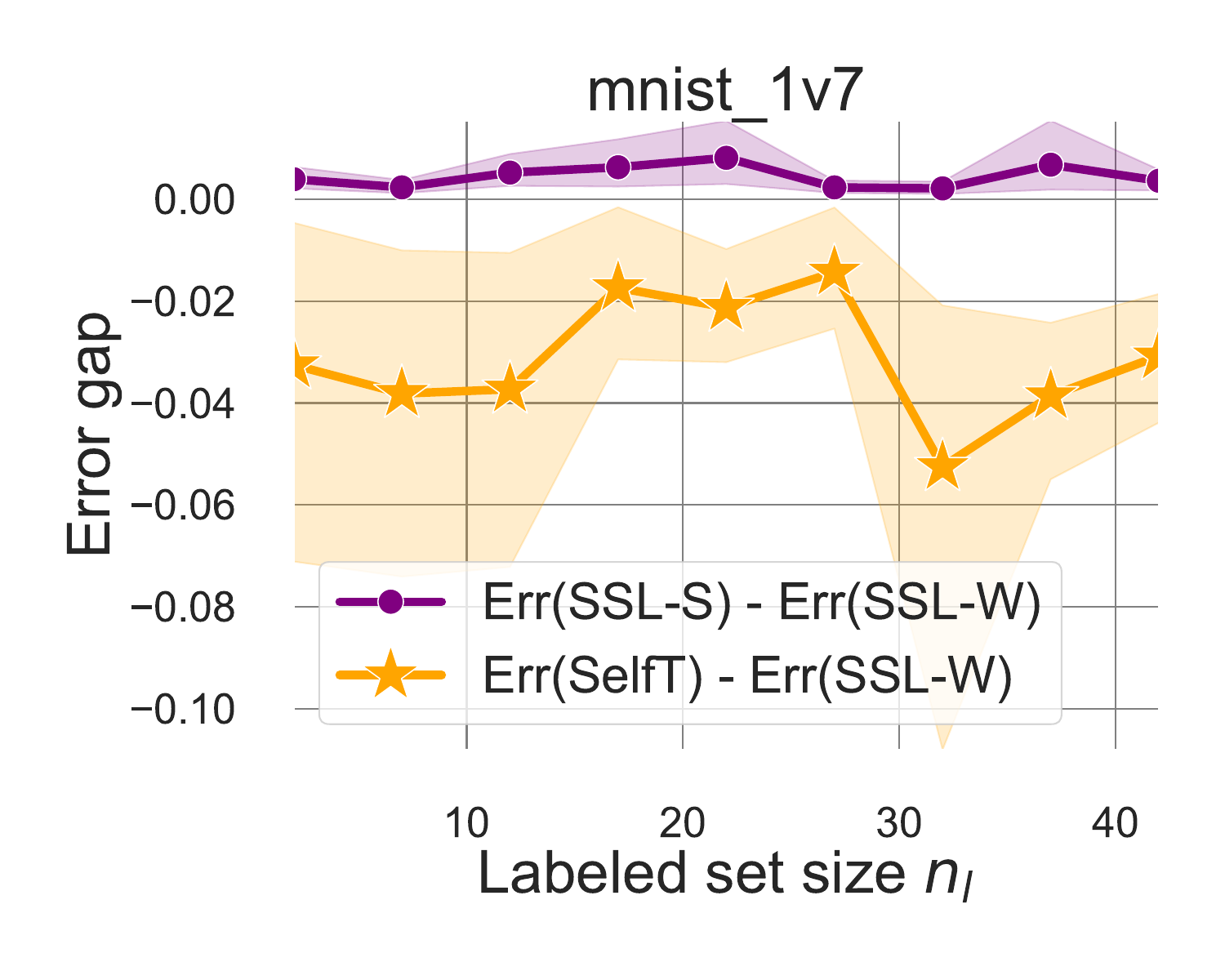}
  \end{subfigure}
  \vspace{-0.2cm}
  \begin{subfigure}[t]{0.32\textwidth}
    \centering
    \includegraphics[width=\textwidth]{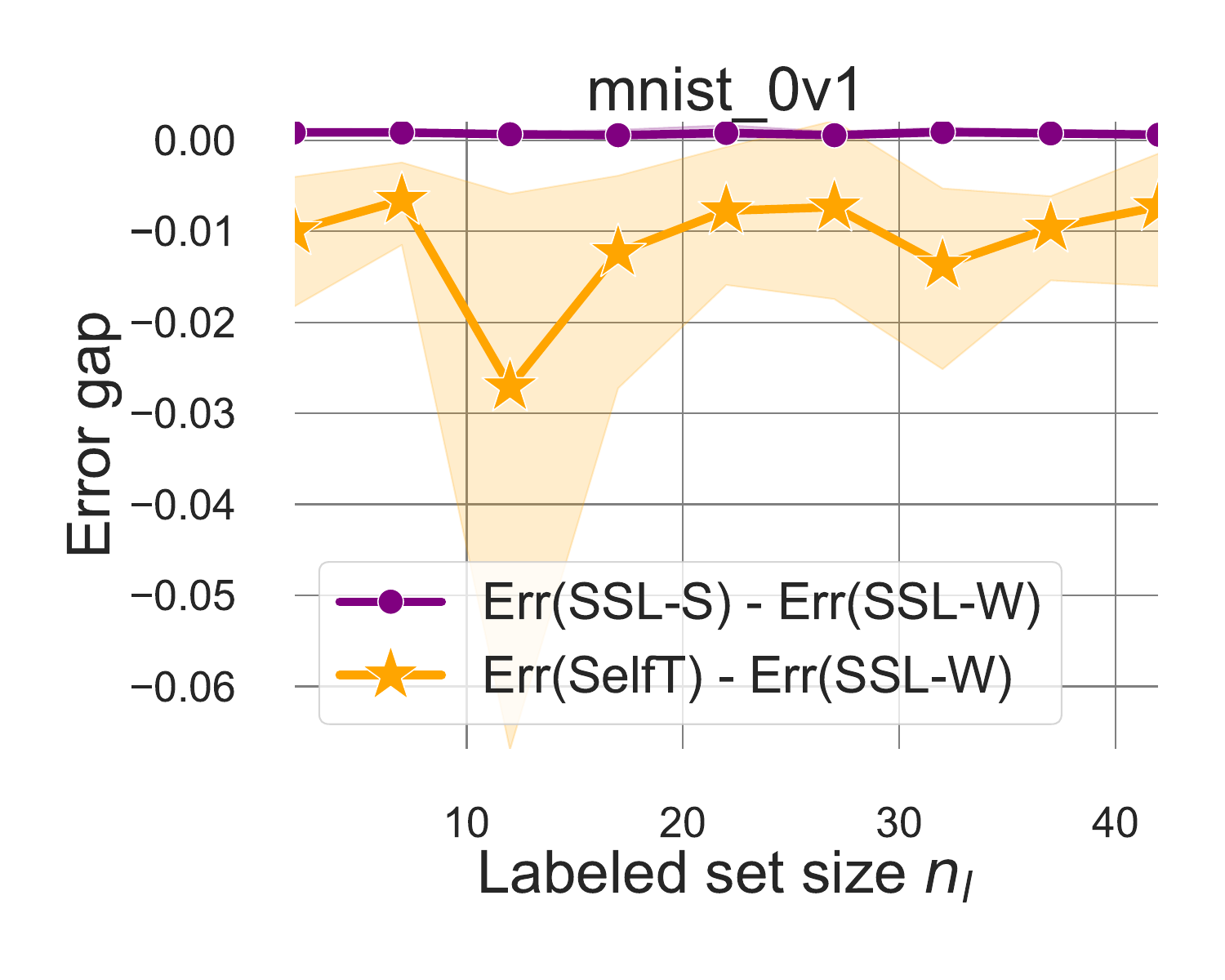}
  \end{subfigure}

  \vspace{-0.2cm}
  \caption{Error gap between SSL-S/self-training and SSL-W on real-world
  datasets. The positive gap indicates that SSL-W (and, in turn, self-training)
  outperforms SSL-S (and hence, also SL and UL+) for a broad range of $\nlab$ values. For all datasets, we set $\nunl=4000$.}
  \label{fig:more_switch_vs_weighted}
  \vspace{-0.5cm}

\end{figure*}


\end{document}